\documentclass[a4paper,11pt]{article} 

\title{Online Optimization of Smoothed Piecewise Constant Functions}



\usepackage{dsfont}
\usepackage{amsmath}
\usepackage{amssymb}
\usepackage{algorithmicx}
\usepackage{algorithm}
\usepackage{algpseudocode}
\usepackage{graphicx}
\usepackage{url}
\usepackage{hyperref}
\usepackage{subcaption}
\usepackage{caption}
\usepackage{amsthm}
\usepackage[round]{natbib}
\bibpunct{(}{)}{;}{a}{,}{,}
\usepackage{fullpage}
\theoremstyle{plain}
\newtheorem{theorem}{Theorem}[section]
\newtheorem{lemma}[theorem]{Lemma}

\newtheorem{observation}[theorem]{Observation}

\newtheorem{corollary}[theorem]{Corollary}

\newtheorem{claim}[theorem]{Claim}

\newcommand{\opt}{\text{OPT}}

\newcommand{\calI}{\mathcal{I}}
\newcommand{\calT}{\mathcal{T}}

\DeclareMathOperator*{\argmin}{arg\!min}



\usepackage{color}
\usepackage{xcolor}

\def\algfor{ {\bf for}}
\def\algdo{ {\bf do}}
\def\algset{ {\bf set}}
\def\alginput{ {\bf input}}
\def\poly{\mathrm{poly}}


\def\etc{ {\it etc.}}
\def\eg{ {\it e.g.,}~}
\def\ie{ {\it i.e.,}~}

\def\E{\mathbb{E}}
\def\F{{\mathcal F}}
\def\calH{{\mathcal H}}
\def\naturals{\mathbb{N}}
\def\reals{\mathbb{R}}


\def\Alg{\mathrm{Alg}}
\def\Regret{\mathrm{Regret}}

\newif\ifshort
\shorttrue 

\author{ }
\author{Vincent Cohen-Addad\thanks{\'{E}cole normale sup\'{e}rieure. Email: vincent.cohen@ens.fr} \and Varun Kanade\thanks{University of Oxford. Email: varunk@cs.ox.ac.uk}}

\begin{document}


\maketitle

\begin{abstract}
	We study online optimization of \emph{smoothed} piecewise constant functions
	over the domain $[0, 1)$. This is motivated by the problem of adaptively
	picking parameters of learning algorithms as in the recently introduced
	framework by \cite{GR2016}. 
	Majority of the machine learning literature has focused on
	Lipschitz-continuous functions or functions with bounded
	gradients.\footnote{These functions are typically called \emph{smooth} in
	the machine learning literature. We avoid this usage here, since we use
	\emph{smoothed} and \emph{smoothness} in the sense of \cite{ST:2004}.} 
	This is with good reason---any learning algorithm suffers linear regret even
	against piecewise constant functions that are chosen adversarially, arguably
	the simplest of non-Lipschitz continuous functions. The \emph{smoothed}
	setting we consider is inspired by the seminal work of \cite{ST:2004} 
	and the recent work of \cite{GR2016}---in 
	this setting, the sequence of functions may be chosen by an adversary,
	however, with some uncertainty in the location of discontinuities. We give
	algorithms that achieve sublinear regret in the full information and bandit
	settings.
\end{abstract}

\section{Introduction}

In this paper, we study the problem of online optimization of piecewise
constant functions. This is motivated by the question of selecting
\emph{optimal parameters} for learning algorithms. Recently, \cite{GR2016}
introduced a \emph{probably approximately correct} (PAC) framework for choosing
parameters of algorithms. Imagine a situation, when a website wishes to provide
personalized results to a user. To respond to a user's query, the service
provider may need to implement a learning (or some other type of) algorithm
which involves choosing parameters. The choice of parameters affects the
quality of solution and ideally we would like to design a mechanism where the
service provider learns from past instances, or at least employs a strategy
that has low regret with respect to the \emph{single} optimal solution in
hindsight. In many learning problems, the goal is to find parameters by
optimizing a continuous function (of the parameters); however, ever so often
one encounters problems with discrete solutions, such as $k$-means or
independent set, which result in objective functions that have discontinuities.
 
Concretely, we consider the problem of online optimization of piecewise
constant functions over the domain $[0, 1)$. At each round the learning
algorithm plays a point $x_t \in [0, 1)$, receives payoff $f_t(x_t)$, where
$f_t$ is a piecewise constant function. The aim of the learning algorithm is to
achieve \emph{no-regret} with respect to the best single point $x^* \in [0, 1)$
in hindsight.\footnote{Unfortunately, the confusing terminology
\emph{no-regret} is common in the literature and is used to denote that the
regret grows sublinearly in $T$, the number of rounds for which the online
algorithm is run.} As is standard, by \emph{no regret} we mean, regret that
grows sub-linearly with $T$, the number of rounds played.  If we make no
assumptions about how the piecewise linear functions $f_t$ are chosen then, it
is easy to construct instances where the algorithm would suffer regret that is
linear in $T$.

We take the view that real-world problems, while not entirely
stochastic are rarely truly adversarial.\footnote{There may be settings where
assumption of a malicious adversary is justified, \eg when spammers, google
bombers, \etc~are actively seeking to compromise systems.} In this work, we
consider a \emph{smoothed} adversary;  rather than defining a piecewise
constant function $f$ over $[0, 1)$ by defining the intervals $[0, a_1),
\ldots [a_{k- 1}, 1)$ exactly, the adversary may only define distributions to
pick the points $a_i$, with the added constraint that the density of these
distributions is upper bounded by some parameter $\sigma$. It is very natural
to assume that there is uncertainty in defining real-world problems, either due
to noise or imperfect information; indeed this was also the motivation of the
original work by \cite{ST:2004} 
where they showed that the smoothed time complexity of (a variant of) the
simplex algorithm is polynomial.  This uncertainty in defining the intervals
(or the points of discontinuity) is precisely what we exploit in designing
no-regret algorithms.

Machine learning research has primarily focused on optimizing
functions with bounded (first few) derivatives.  For example, there exists
substantial literature on online optimization of Lipschitz continuous
functions, both in the full information and the bandit setting (see \eg
\citep{Kle:2004,KSU:2008,BMG:2009}). However, any sort of combinatorial
structure typically introduces discontinuities in the objective function. Thus,
most of the existing methods for online optimization are no longer applicable.

The \emph{smoothness} formulation we use in the paper restricts an adversary
from being able to define too narrow an interval in which \emph{optimal
solutions} may lie. In particular, if we consider the refinement of all the
intervals we get over $T$ rounds of the (smoothed) adversary choosing piecewise
constant functions, the smallest interval is still polynomially small in $T$
(and the bound $\sigma$ on the density and the number of pieces $k$). This
ensures that the problem is not that of finding a needle in a haystack, but a
rather hefty iron rod. Under these conditions, in principle, one could simply
draw a large enough (but still polynomial in $T$) number of points uniformly in
the interval $[0, 1)$, and consider the problem as the standard experts
setting. The bandit setting is a bit more delicate, but still could be handled
using ideas similar to the $\mathrm{Exp.4}$ algorithm \citep{AC-BFS:2003}.  The
difficulty is \emph{computational}---we would rather design algorithms that at
time step $t$, run in time polynomial in $\log(t)$ and other problem-dependent
factors, than in time polynomial in $t$.  With carefully designed algorithms
and data-structures, we can indeed achieve this goal. We summarize our
contributions below, describe related work, and then discuss how our work fits
in a broader context.

\subsection{Our Results}

We show that against a \emph{smoothed} adversary, one can design algorithms
that achieve the almost optimal regret of $\tilde{O}(\sqrt{T})$ (the
$\tilde{O}(\cdot)$ notation hides poly-logarithmic factors) in the expert
setting, \ie when we observe the entire function $f_t$ at the end of the round.
Our algorithm is based on a continuous version of the exponentially-weighted
forecaster. A na\"{i}ve implementation of the algorithm we propose would result
in a running time that grows polynomially in $t$. We design a data structure
based on \emph{interval trees} or \emph{red-black trees} (see Section
\ref{sec:interval-trees} for the full description) that allows us to implement
our algorithm extremely efficiently---the running time of our algorithm is
$O(k\log(kt))$ at any given timestep, where $k$ is the number of pieces of the
piecewise constant function at each round. We remark that at least logarithmic
dependence in $t$ is required in the sense that even keeping track of time
requires time at least $\log(t)$. 

We also consider the \emph{bandit setting}: here the algorithm does not observe
the entire function $f_t$, but only the value $f_t(x_t)$ for the point chosen.
In order to estimate the function $f_t$ elsewhere, we optimistically assume
that it is constant in some (suitably chosen) small interval around $x_t$. Of
course, sometimes the point $x_t$ chosen by the algorithm may lie very close to
a point of discontinuity of the function $f_t$. However, this cannot happen
very often because of the \emph{smoothness} constraint on the adversary. With a
somewhat delicate analysis we obtain a regret bound of $\tilde{O}(\poly(k,
\sigma) T^{2/3})$ in the bandit setting. As in the full information (or expert)
setting, our algorithm is very efficient, \ie polynomial in $\log(t)$ and other
parameters such as $k$ and $\sigma$.

In Section~\ref{sec:experiments}, we consider a few different problems in this
setting---knapsack, weighted independent set, and weighted $k$-means. The first
two of these problems were already considered by \cite{GR2016}, 
however they do not provide \emph{true} regret bounds in their paper, \ie
bounds where the average regret approaches $0$ as $T \rightarrow
\infty$.\footnote{We remark that their algorithm will be a ``true'' no-regret
algorithm with suitably chosen parameters, but in that regime the running time
is polynomial in $T$.} The combinatorial nature of these problems means that as
we vary the algorithm parameters, there may be discontinuities in the objective
function. Our preliminary experiments indicate that the behavior does indeed
correspond to that predicted by theoretical bounds, albeit with slightly better
rates of convergence (possibly since we generate instances randomly).

\subsection{Related Work}

The work most closely related to ours is that of \cite{GR2016}. 
In the context of online learning, our work improves upon theirs in providing
bounds for online learning of algorithm parameters that are \emph{true regret}
bounds; in their paper they only provide $\epsilon$-regret bounds, in that one
can guarantee that for any given $\epsilon$ the algorithm will achieve average
regret of $\epsilon$. The algorithms we present in this paper are more natural
and achieve a significant improvement in running time. We also give results in
the \emph{bandit} setting, which is in many ways more appropriate for the
applications under consideration. The approach considered in their paper does
not yield a bandit algorithm. With some effort, one may be able to adapt ideas
from the $\mathrm{Exp}.4$ algorithm of \cite{AC-BFS:2003} to achieve a
non-trivial regret bound in the \emph{bandit} case; however, the resulting
algorithm would be computationally expensive.

There is a substantial body of work that seeks to use learning mechanisms to
choose the parameters or \emph{hyperparameters} of algorithms.
\cite{SSZA:2014, SLA:2014} suggest using Bayesian optimization
techniques to choose hyperparameters effectively. Yet other papers (see \eg
\citep{F:98, HJYCMN:10, KGM:12, HXHL:15}) suggest various techniques to choose
parameters for algorithms (not necessarily in the context of learning).
However, except for the work of \cite{GR2016}, most work is not theoretical in
nature.

\subsection{Discussion}
\label{sec:discussion}

The notion of \emph{smoothness} considered in this paper is inspired by the
seminal work of \cite{ST:2004}. 
In theoretical computer science, this notion allows us to look beyond
worst-case analysis without making extremely strong assumptions required for
average-case analyses. This approach seems particularly relevant to machine
learning, a filed in which worst-case results or those using strong
distributional assumptions typically have little bearing in practice.  The
smoothness considered here is on the instances themselves rather than on the
functions being optimized as is common in machine learning.  Combinatorial
problems arise naturally in several online and offline learning settings; in
these cases the notion of \emph{smoothness} \`a la Spielman and Teng may be
more appropriate than the traditional notion of Lipschitz continuity. It would
be interesting to explore if this notion of smoothness is applicable in other
settings, \eg sleeping combinatorial experts and bandit settings---where it is
known that \emph{stochastic instances} are often tractable, while adversarial
ones are \emph{computationally hard} (see \eg \citep{KS14, NV14, KLP15}).

A natural open question is whether our work could be extended to more general
functions with discontinuities, such as piecewise linear or piecewise Lipschitz
functions. If computational cost were not a concern, we believe this could be
achieved (at least in the full-information setting) by choosing a fine enough
grid of $[0, 1)$ as experts. However, whether efficient algorithms
such as ours for the case of piecewise constant functions can be designed is
an open question.

\section{Setting}

We consider the online optimization setting where the decision space is $[0,
1)$. At each time step $t$, the learning algorithm must pick a point $x_t \in
[0, 1)$ to play. A \emph{smoothed oblivious adversary} picks a function $f_t :
[0, 1) \rightarrow [0, 1]$ that is piecewise constant as follows:
\begin{enumerate}
	\item Adversary defines distributions $D_{t, 1}, \ldots, D_{t, k-1}$ where the support of each $D_{t, i}$ is contained in $(0, 1)$ and the density functions are bounded by $\sigma$.
	\item Adversary defines values $v_{t,1}, \ldots, v_{t,k} \in [0, 1]$
	\item Nature draws $a^\prime_{t, i} \sim D_{t, i}$ independently for $i = 1,
	\ldots, k-1$.  Let $0 = a_{t, 0}, a_{t, 1}, \ldots, a_{t, k-1}, a_{t, k} =
	1$ be in non-decreasing order, where $a_{t,1}, \ldots, a_{t, k-1}$ are just
	$a^\prime_{t, 1}, \ldots, a^\prime_{t, k-1}$ sorted.\footnote{Under the
	smoothness assumptions, the $a^\prime_{t, i}$ will be distinct with
	probability $1$.}
\item The piecewise constant function $f_t$ is defined as $f_t(x) = v_{t,i}$ for
	$x \in [a_{t, i - 1}, a_{t, i})$.
\end{enumerate}

For a known time horizon $T$, let $x_1, \ldots, x_T$ be the choices made by the
learning algorithm.\footnote{We assume that the time horizon $T$ is known,
otherwise, the standard doubling trick may be applied.} Then the regret is
defined as: 
\begin{align*}
	\Regret(\Alg) = \max_{x \in [0, 1)} \sum_{t = 1}^T f_t(x) - \sum_{t = 1}^T f_t(x_t)
\end{align*}

We consider the full information (or \emph{experts}) setting, where at the end
of each round the full function $f_t$ is revealed to the learning algorithm. We
also look at the \emph{bandit} setting, where the learning algorithm only sees
the value $f_t(x_t)$. All results in this paper are stated in terms of expected
regret; we believe that bounds that hold with high probability can be obtained
using standard techniques.

All the results in this paper can easily be generalized to the setting where
the decision space is $[0, 1]^d$ and the adversary chooses functions that are
constant on sub-hypercubes. In the full-information setting, the regret
guarantees will be worse by a factor that is polynomial in $d$ and the running
time worse by a factor exponential in $d$. In the bandit setting, both the
regret and the running time will be worse by a factor exponential in $d$. For
simplicity we only discuss the one dimensional setting and defer the general
case to the full version of the paper.

We first state some useful observation that we will use repeatedly in this
paper.  These are by no means original and already appear in some form in
\citep{GR2016} for example.

\begin{observation} \label{obs:even-spread}
Let $x_1, \ldots, x_m$ be independently drawn from any distributions (possibly different) whose density functions are bounded by $\sigma$. Then the probability that there exist $x_i < x_j$ such that $x_j - x_i < \epsilon$ is at most $m^2 \sigma \epsilon$.
\end{observation}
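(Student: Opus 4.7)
The plan is a straightforward union bound over pairs, combined with the basic fact that a distribution with density at most $\sigma$ assigns probability at most $\sigma L$ to any interval of length $L$.

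First I would restate the bad event as ``there exist distinct indices $i \neq j$ with $|x_i - x_j| < \epsilon$'' (the two forms agree almost surely under the density bound, since ties have probability zero), and apply a union bound over the $\binom{m}{2}$ unordered pairs. For each fixed pair $\{i,j\}$, I would condition on $x_i$; by independence, the conditional law of $x_j$ coincides with its marginal, so it still has density bounded by $\sigma$. The event $|x_j - x_i| < \epsilon$ then places $x_j$ in an interval of length $2\epsilon$ centered at the (now fixed) point $x_i$, so its conditional probability is at most $2\sigma\epsilon$. Taking expectations over $x_i$ preserves this bound.

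Summing over all $\binom{m}{2}$ pairs yields total probability at most $\binom{m}{2}\cdot 2\sigma\epsilon = m(m-1)\sigma\epsilon \leq m^2\sigma\epsilon$, which is the claim. The only mildly subtle point is invoking independence to argue that conditioning on $x_i$ does not degrade the density bound on $x_j$; once this is observed, the per-pair estimate is immediate from the definition of bounded density, and the remainder is a routine union bound, so there is no real obstacle.
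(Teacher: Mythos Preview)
Your argument is correct and follows the same approach as the paper, which simply states that the proof is a union bound over all $\binom{m}{2}$ pairs. You have merely filled in the per-pair estimate (conditioning on one coordinate and using the density bound on the other) that the paper leaves implicit.
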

\begin{proof}
The proof is just an application of the union bound over all $m \choose 2$ pairs. 
\end{proof}

\begin{observation} \label{obs:fine-net}
Let $0 = x_0 < x_1 < \cdots < x_m = 1$ be any points in $[0, 1]$ such that $x_{i} - x_{i - 1} > \epsilon$ for all $i$. Let $y_1, \ldots, y_N$ be drawn uniformly at random from $[0, 1)$. Then if $N \geq \frac{1}{\epsilon} (\ln(\epsilon^{-1}) + \ln(\delta^{-1}))$, the probability that there exists $i$ such that there is no $y_j$ in the interval $(x_{i-1}, x_{i})$ is at most $\delta$.
\end{observation}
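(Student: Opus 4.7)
The plan is a straightforward union bound over the $m$ intervals combined with the standard estimate $(1 - p)^N \leq e^{-pN}$.

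First I would observe that since $x_i - x_{i-1} > \epsilon$ for every $i$ and the intervals partition $[0,1]$, we must have $m < 1/\epsilon$. This is the only place the hypothesis is used to control the number of intervals entering the union bound, rather than as a standalone lower bound on interval length.

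Next, fix an index $i$ and consider the event $E_i$ that none of the $N$ uniform samples $y_1, \ldots, y_N$ lies in $(x_{i-1}, x_i)$. Since each $y_j$ is uniform on $[0,1)$ and independent, and the interval has length $x_i - x_{i-1} > \epsilon$, we have
\[
\Pr[E_i] = (1 - (x_i - x_{i-1}))^N < (1 - \epsilon)^N \leq e^{-\epsilon N}.
\]
Then the union bound over the (fewer than $1/\epsilon$) intervals gives
\[
\Pr\Bigl[\bigcup_{i=1}^m E_i\Bigr] \leq m \cdot e^{-\epsilon N} < \frac{1}{\epsilon}\, e^{-\epsilon N}.
\]

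Finally, I would verify that the stated choice of $N$ makes this at most $\delta$. Taking logarithms, $\frac{1}{\epsilon} e^{-\epsilon N} \leq \delta$ is equivalent to $\epsilon N \geq \ln(\epsilon^{-1}) + \ln(\delta^{-1})$, i.e., $N \geq \frac{1}{\epsilon}(\ln(\epsilon^{-1}) + \ln(\delta^{-1}))$, which is exactly the hypothesis. There is no real obstacle here: the only subtlety worth flagging is that $m < 1/\epsilon$ follows from the length constraint summing to $1$, which is what keeps the union-bound factor logarithmic in $\epsilon^{-1}$ rather than polynomial in $m$.
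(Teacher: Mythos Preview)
Your argument is correct and is exactly the balls-into-bins reasoning the paper sketches in one sentence: at most $\epsilon^{-1}$ bins, each hit with probability at least $\epsilon$, then a union bound and the estimate $(1-\epsilon)^N \le e^{-\epsilon N}$. You have simply written out in full what the paper leaves implicit.
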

\begin{proof} This is basically a balls into bins argument with at most $\epsilon^{-1}$ bins and the probability of throwing a ball into each bin is at least $\epsilon$.
\end{proof}

%
%

\section{Full Information Setting}

First in Section~\ref{sec:lower-bound}, we explain why it is necessary to look
at \emph{smoothed adversaries}; without this, one can easily construct a
relatively benign instance that suffers a regret of $\Omega(T)$. Then, we give
an exponentially-weighted forecaster that achieves expected regret that is
$O(\sqrt{\log(T) T})$. We show that in fact this can be implemented very
efficiently; at time $t$ the running time of our algorithm is polynomial in $k$
and $\log(t)$, where $k$ is the number of pieces of the functions.

\subsection{Lower Bound for Worst-Case Adversaries}
\label{sec:lower-bound}
Here, we show that unless one adds a smoothness assumption the worst-case
regret bound is linear in $T$. This is unsurprising and in a way already
follows from Theorem~4.2 in the extended version of \citep{GR2016}. However, in
the more general setting considered in this paper, the bad instance is simpler
to describe and we provide it for completeness.

We describe a simple adversary that chooses functions that are piecewise
constant with at most $3$ pieces and each piece being of length at least $1/5$;
but for allowing the adversary to choose the points of discontinuity arbitrarily, the
freedom given to the adversary is limited. For round $1$ the adversary
chooses $f_1$ such that $f_1(x) = 1$ for $x \in [2/5, 3/5)$ and $0$ otherwise.
On round $2$ the adversary chooses $f_2$ to be $1$ on either the interval
$[3/10, 1/2)$ or $[1/2, 7/10)$ uniformly at random. Thus, any learning
algorithm can get expected payoff at most $1/2$, but either the entire interval
$[2/5, 1/2)$ or $[1/2, 3/5)$ would have got payoff of $1$ on both rounds. In
general if after $t$ rounds, if there is an interval $[l_t, u_t)$ such that
$f_s(x) = 1$ for all $x \in [l_t, u_t)$ for all $s \leq t$, then if $m_t = (l_t
+ u_t)/2$, the adversary picks $f_{t +1}$ to take value on either $[m_t - 1/5,
m_t)$ or $[m_t, m_t + 1/5)$. It is clear that after $T$ rounds the expected
payoff of any learning algorithm is at most $T/2$, however there exists a point
$x^* \in [0, 1)$ that would receive a payoff of $T$.

\subsection{No-Regret Algorithm for Smoothed Adversaries}

Algorithm~\ref{alg:expwt} is a fairly standard exponentially-weighted
forecaster that is used to make predictions in the non-stochastic setting. We
describe the efficient implementation using \emph{modified} interval trees in
Section~\ref{sec:interval-trees}. If one were merely concerned with achieving a
running time that was polynomial in $T$ at each round, there is a rather easy
solution. Using Observation~\ref{obs:even-spread} we know that even after a
full refinement of intervals that appear in the functions $f_1, \ldots, f_T$
with high probability the smallest interval is of length at least $1/T^3$. Then
Observation~\ref{obs:fine-net} tells us that by picking $T^3 \log(T)$ points
uniformly at random, we get a set of points that would hit every interval in
the refinement. Thus, we could pick these points to begin with and apply a
standard expert algorithm, such as Hedge~\citep{FS:1995}. Since the regret
depends only logarithmically on the number of experts, we would still achieve a
regret bound of $O(\sqrt{ \log(T) T })$. In a way, this is what \cite{GR2016}
do to obtain their ``low-regret'' bound. However, this solution is inelegant
and suffers significantly in terms of computational cost. Our algorithm runs in
time polynomial in $k$ and $\log(t)$ at time $t$; note that dependence on
$\log(t)$ is required even just to keep track of time!

\begin{algorithm}
	\alginput: $\eta$ \smallskip \\
	\algset~$F_1(x) = 0$ to be the constant $0$ function over $[0, 1)$ \smallskip \\
	\algfor~ $t = 1, 2, \ldots, T$ \algdo
	\begin{enumerate}
		\item Define $p_t(x) = \frac{\exp(\eta F_t(x))}{\int_{0}^{1} \exp(\eta F_t(x)) dx}$ \label{alg:step:distrib}
		\item Pick $x_t \sim p_t$ \label{alg:step:select}
		\item Observe function $f_t$ and receive payoff $f_t(x_t)$ 
		\item Set $F_{t + 1} = F_t + f_t$  \label{alg:step:update}
	\end{enumerate}
	\caption{No-regret algorithm \label{alg:expwt}}
\end{algorithm}

\begin{theorem}
  \label{thm:noregret}
  With probability $1$, the expected (the expectation is only with respect to
  the random choices of the algorithm, but not those of nature) regret of
  Algorithm~\ref{alg:expwt} is bounded by 
	\[ \eta (e - 2) T - \frac{\ln(\epsilon^*)}{\eta} \]
	where if $x^* \in [0, 1)$ is the best point in hindsight such that $x^* \neq
	a_{t, j}$ for any $t \in [T]$ and $j \in [k]$, \ie $x^*$ is not a point of
	discontinuity for any of the functions $f_t$, 
	\[ \epsilon^* = \min \{a_{t,j} ~|~ t \in [T], j \in [k], a_{t,j} > x^*\} -
	\max \{a_{t,j} ~|~ t \in [T], j \in [k], a_{t,j} < x^*\}, \]
	Here $\epsilon^*$ is the length of the largest interval containing the point
	$x^*$ for which $F_{T + 1}$ is constant (and optimal).
\end{theorem}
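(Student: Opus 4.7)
The plan is to run the standard exponential-weights potential argument, but in its continuous (integral) form, and then use the fact that on a length-$\epsilon^*$ interval around $x^*$ the cumulative function $F_{T+1}$ is constant to extract the $\ln(\epsilon^*)/\eta$ term.

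First I would define the potential $W_t = \int_0^1 \exp(\eta F_t(x))\,dx$, so $W_1 = 1$ and $p_t(x) = \exp(\eta F_t(x))/W_t$. The lower bound on $W_{T+1}$ is the key place where smoothness enters: because $x^*$ is not a discontinuity of any $f_t$, there is an open interval $I^* \ni x^*$ of length $\epsilon^*$ on which every $f_t$, and hence $F_{T+1}$, is constant with value $\sum_t f_t(x^*)$. Therefore
\[
W_{T+1} \;\geq\; \int_{I^*} \exp\!\bigl(\eta F_{T+1}(x)\bigr)\,dx \;=\; \epsilon^* \exp\!\Bigl(\eta \sum_{t=1}^T f_t(x^*)\Bigr),
\]
so $\ln W_{T+1} \geq \ln \epsilon^* + \eta \sum_t f_t(x^*)$.

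Next I would upper bound the per-round increase $W_{t+1}/W_t$. Since $F_{t+1} = F_t + f_t$,
\[
\frac{W_{t+1}}{W_t} \;=\; \int_0^1 p_t(x) \exp(\eta f_t(x))\,dx \;=\; \mathbb{E}_{x \sim p_t}\!\bigl[\exp(\eta f_t(x))\bigr].
\]
Using the standard inequality $e^{\eta y} \leq 1 + \eta y + (e-2)\eta^2 y^2$ valid for $y \in [0,1]$ and $\eta \leq 1$, together with $f_t(x) \in [0,1]$, this gives
\[
\frac{W_{t+1}}{W_t} \;\leq\; 1 + \eta\, \mathbb{E}_{x_t \sim p_t}[f_t(x_t)] + (e-2)\eta^2,
\]
and taking $\ln$ (with $\ln(1+z)\leq z$) and summing telescopically yields
\[
\ln W_{T+1} - \ln W_1 \;\leq\; \eta \sum_{t=1}^T \mathbb{E}[f_t(x_t)] + (e-2)\eta^2 T.
\]

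Finally, I would combine the two bounds (using $W_1=1$):
\[
\ln \epsilon^* + \eta \sum_{t=1}^T f_t(x^*) \;\leq\; \eta \sum_{t=1}^T \mathbb{E}[f_t(x_t)] + (e-2)\eta^2 T,
\]
and rearrange to obtain the claimed bound $\eta(e-2)T - \ln(\epsilon^*)/\eta$ on the expected regret. The ``with probability $1$'' qualifier is just the statement that the quantity $\epsilon^*$ is almost surely well-defined and strictly positive, since each $a_{t,j}$ is drawn from a density-bounded distribution and hence differs from $x^*$ almost surely; no genuine probabilistic argument is needed here. The only delicate step is the lower bound on $W_{T+1}$: one must argue that $x^*$ truly lies in an interval of length $\epsilon^*$ on which $F_{T+1}$ is constant, which requires carefully invoking the definition of $\epsilon^*$ as the gap between the nearest discontinuity to the left and to the right of $x^*$ across all rounds.
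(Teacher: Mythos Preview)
Your proposal is correct and follows essentially the same continuous exponential-weights potential argument as the paper: the same potential $W_t=\int_0^1 \exp(\eta F_t)$, the same lower bound $W_{T+1}\ge \epsilon^*\exp(\eta\,\opt)$ coming from the constant interval around $x^*$, and the same telescoping upper bound. The only cosmetic difference is that the paper first uses $e^{\eta z}\le 1+(e^\eta-1)z$ for $z\in[0,1]$ and only at the end applies $e^\eta-1-\eta\le(e-2)\eta^2$ (yielding the intermediate bound $\eta(e-2)P(\Alg)$ before relaxing $P(\Alg)\le T$), whereas you apply $e^z\le 1+z+(e-2)z^2$ directly with $z=\eta f_t(x)$ and bound $f_t(x)^2\le 1$ immediately; both routes give the stated bound.
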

\begin{proof}
  Let $x^* \in \argmin F_{T + 1}(x)$ such that $x^* \not\in \{a_{t,j} ~|~ t \in
  [T], j \in [k] \}$. Note that such an $x^*$ exists as long as all the
    functions have distinct points of discontinuity, which happens with probability
    $1$ under our model of max density assumption. Also let $a^* = \min \{a_{t,j}
  ~|~ a_{t,j} > x^*\}$ and $b^* = \max \{ a_{t,j} ~|~ a_{t,j} < x^*\}$. Then, note
  that $F_{T+1}$ is constant on the interval $(a^*, b^*)$. 

  Let $W_t = \int_{0}^{1} \exp(\eta F_{t}(x)) dx$. We follow the fairly
  standard proof of obtaining regret bounds for exponentially-weighted
  forecasting. Let $P_t = \E_{x \sim p_t}[f_t(x)]$ denote the expected payoff
  achieved by the algorithm in round $t$, where the expectation is only with
  respect to the algorithm's random choices. The first observation is to upper
  bound $W_{t + 1}/W_{t}$ by $\exp((e^{\eta} - 1) P_t)$. To see this,
  \begin{align*}
    \frac{W_{t+1}}{W_{t}} &= \frac{\int_0^1 \exp(\eta(F_{t+1}(x)) dx}{\int_0^1 \exp(\eta(F_{t}(x)) dx} \\
    &= \frac{\int_0^1 \exp(\eta F_{t}(x)) e^{\eta f_t(x)} dx}{\int_0^1 \exp(\eta(F_{t}(x)) dx} & \mbox{Since $F_{t + 1} = F_{t} + f_t$} \\
    &= \int_0^1 p_{t}(x) e^{\eta f_t(x)} dx & \mbox{By definition of $p_t$} \\
    &\leq \int_0^1 p_{t}(x) (1 + (e^\eta - 1) f_t(x)) dx & \mbox{For $z \in [0,1]$, $e^{\eta z} \leq 1 + (e^\eta - 1)z$} \\
    &\leq 1 + (e^\eta - 1) P_t \leq \exp((e^\eta - 1) P_t) & \mbox{Since $1 + z \leq e^z$}
  \end{align*}
  
Thus, we get that 
\begin{align}
	\frac{W_{T + 1}}{W_1} &\leq \exp\left( (e^\eta - 1) \sum_{t = 1}^T P_t \right) = \exp((e^\eta - 1) P(\Alg))  \label{eqn:upperbound}
\end{align}
Where $P(\Alg) = \sum_{t = 1}^T P_t$ is the expected payoff of the algorithm (with respect to its random choices).

On the other hand $W_1 = \int_0^1 \exp(\eta F_1(x)) dx = 1$ and,
\begin{align}
	W_{T + 1} &= \int_0^1 \exp(\eta F_{T + 1}(x)) dx \\
	&\geq \int_{a^*}^{b^*} \exp(\eta F_{T+1}(x)) dx & \mbox{Since $\exp(\eta F_{T + 1}(x)) \geq 0$ on $[0, 1)$} \\
	&= \epsilon^* \exp(\eta \opt) & \mbox{where $\opt = F_{T + 1}(x^*)$} \\
	\intertext{Thence, we have,}
	\frac{W_{T+1}}{W_1} &\geq \epsilon^* \exp(\eta\opt) \label{eqn:lowerbound}
\end{align}

Using (\ref{eqn:upperbound}) and (\ref{eqn:lowerbound}) together, after taking logarithms of both sides, we get
\begin{align*}
	\eta \opt + \log(\epsilon^*) &\leq (e^\eta - 1) P(\Alg) \nonumber 
	\intertext{Rearranging terms and dividing by $\eta$, we get}
	\opt - P(\Alg) &\leq \frac{e^\eta -1 - \eta}{\eta} P(\Alg) - \frac{\log(\epsilon^*)}{\eta} \nonumber
	\intertext{For $\eta \in [0, 1]$, $e^\eta \leq 1 + \eta + (e - 2)\eta^2$ and $P(\Alg) \leq
	T$ since the range of $f_t$ is $[0, 1]$. Thus, we get}
	\opt - P(\Alg) &\leq  \eta (e - 2) P(\Alg) - \frac{\log\epsilon^*}{\eta}
\end{align*}
\end{proof}

In order to obtain the appropriate regret bound we just need to apply
Observation~\ref{obs:even-spread} to get a bound on the value of $\epsilon^*$.
Note that by choosing $\delta = 1/T$ and assuming that $k, \sigma < T$, we get
an expected regret bound of the form $O(\sqrt{\log(T) T})$, where the
expectation is taken with respect to the random choices of both the algorithm
and nature.

\begin{corollary}
	The expected regret of Algorithm~\ref{alg:expwt} is bounded by $2\sqrt{(e-2) \log(k^2 T^3 \sigma) T} + 1$. The expectation is with respect to the random choices of the algorithm as well as nature.
\end{corollary}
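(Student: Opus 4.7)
The plan is to combine Theorem~\ref{thm:noregret} with Observation~\ref{obs:even-spread} and then pick $\eta$ to balance the two terms in the regret bound. The theorem gives a per-run bound
\[
\Regret(\Alg) \le \eta(e-2)T - \frac{\ln \epsilon^*}{\eta},
\]
so the task reduces to controlling $\epsilon^*$ in expectation over nature's coin flips and then optimizing~$\eta$.

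First, I would lower bound $\epsilon^*$ with high probability. Across all $T$ rounds the adversary/nature draw at most $m = k(T-1) \le kT$ discontinuity points, each from a distribution with density at most $\sigma$. Applying Observation~\ref{obs:even-spread} to all of these points with threshold $\epsilon = 1/(k^2 T^3 \sigma)$ shows that, except on an event of probability at most $m^2 \sigma \epsilon \le 1/T$, every two distinct discontinuity points are separated by at least $\epsilon$. Since $\epsilon^*$ is exactly the distance between the two discontinuity points immediately bracketing $x^*$, this gives $\epsilon^* \ge 1/(k^2 T^3 \sigma)$ on this good event, hence $-\ln \epsilon^* \le \ln(k^2 T^3 \sigma)$.

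Next, on the good event I would pick $\eta$ to minimize the bound $\eta(e-2)T + \ln(k^2 T^3 \sigma)/\eta$. Setting the derivative to zero gives $\eta = \sqrt{\ln(k^2 T^3 \sigma)/\bigl((e-2)T\bigr)}$, and substituting back yields a conditional regret bound of $2\sqrt{(e-2)\ln(k^2 T^3 \sigma)\, T}$. (I would quickly check that this $\eta$ lies in $[0,1]$ for the parameter regime $k,\sigma < T$ used in the statement, which is needed for the inequality $e^\eta \le 1 + \eta + (e-2)\eta^2$ invoked in the theorem's proof.)

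Finally, I would handle the bad event. Since payoffs lie in $[0,1]$, regret is trivially at most $T$ on every run. Combining the two cases,
\[
\E[\Regret(\Alg)] \le \Pr[\text{good}] \cdot 2\sqrt{(e-2)\ln(k^2 T^3 \sigma)\, T} + \Pr[\text{bad}] \cdot T \le 2\sqrt{(e-2)\ln(k^2 T^3 \sigma)\, T} + 1,
\]
which is exactly the claim. There is no real obstacle here; the only mildly delicate point is ensuring the union bound in Observation~\ref{obs:even-spread} is applied to the correct set of $\le kT$ points (all discontinuities across all rounds), so that the additive $+1$ coming from the failure event is genuinely a constant.
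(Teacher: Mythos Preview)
Your proposal is correct and follows essentially the same route as the paper: invoke Observation~\ref{obs:even-spread} with $m \le kT$ and $\epsilon = 1/(k^2 T^3 \sigma)$ to get $\epsilon^* \ge 1/(k^2 T^3 \sigma)$ except with probability $1/T$, plug into Theorem~\ref{thm:noregret}, optimize $\eta = \sqrt{\ln(k^2 T^3 \sigma)/((e-2)T)}$, and absorb the failure event via the trivial bound $\Regret \le T$ to produce the $+1$. Your added remark verifying $\eta \in [0,1]$ is a small nicety the paper omits; the only cosmetic slip is writing $m = k(T-1)$ rather than $(k-1)T$, which is harmless since you immediately bound by $kT$.
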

\begin{proof}
	Using Observation~\ref{obs:even-spread}, we know that with except with
	probability $\delta = 1/T$, $\epsilon^* \geq 1/(k^2T^3\sigma)$; also
	$P(\Alg) \leq T$ (for any choices of nature). Substituting these two bounds
	in the statement of Theorem~\ref{thm:noregret} and setting $\eta =
	\sqrt{\tfrac{\log(k^2T^3 \sigma)}{(e-2)T}}$ gives the required bound. The $+
	1$ term accounts for the fact that with probability $\delta = 1/T$, the
	regret could be as high as $T$. 
\end{proof}

\noindent {\bf Running Time}. A na\"{i}ve implementation of
Algorithm~\ref{alg:expwt} would result in running time that is polynomial in
$t$ at time $t$ (apart from also being polynomial in $k$ and $\sigma$). This is
because the number of intervals in the refinement increase linearly in $T$.
However, by using an \emph{augmented} interval tree data structure to store the
past functions and using \emph{messages} to perform updates lazily, we can
guarantee that the running time of the algorithm is polynomial in $\log(t)$.
The sampling required in Step \ref{alg:step:select} of the algorithm can also
be implemented efficiently by storing auxiliary information about the
\emph{weights} of intervals. The next subsection explains this data structure
in detail. 

\subsection{An Efficient Data-Structure}

\label{sec:interval-trees}

\begin{figure}
  \centering
  \includegraphics[scale=0.45]{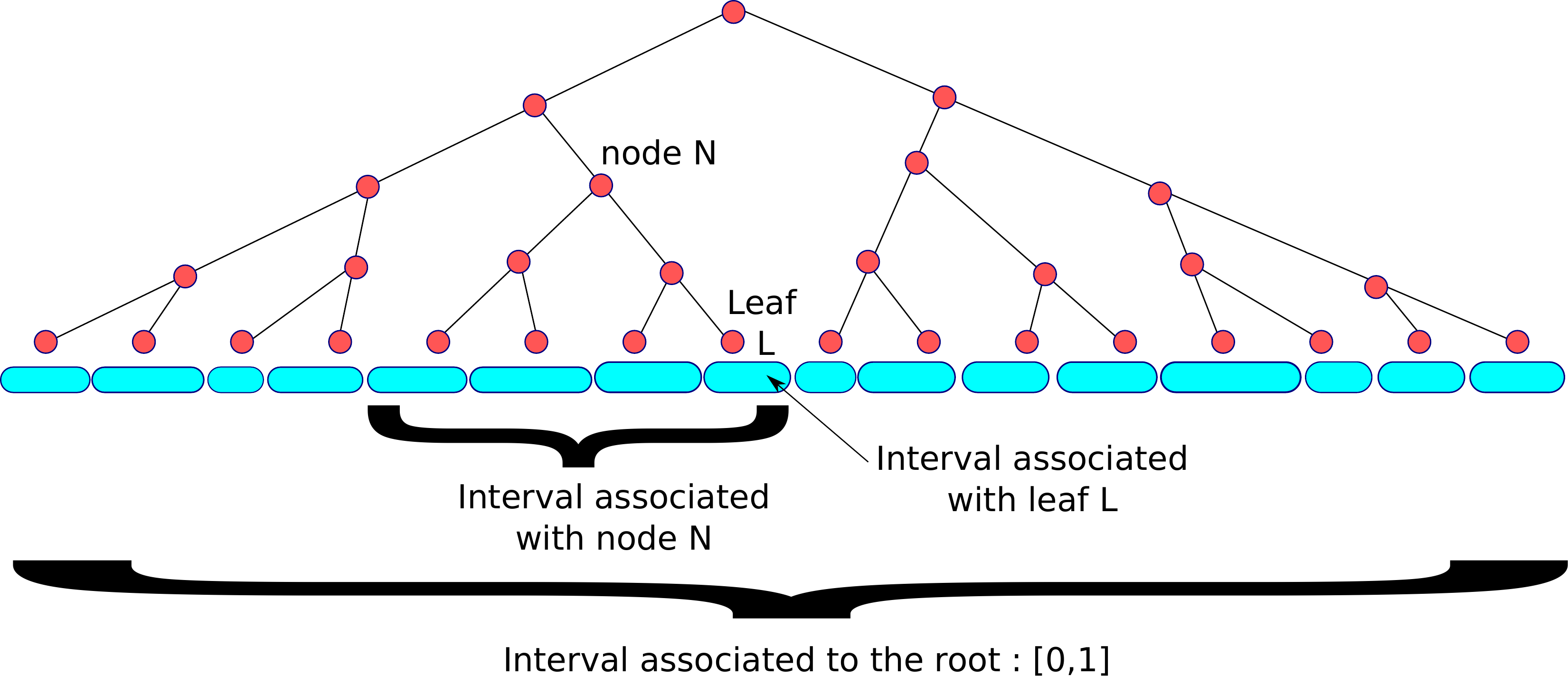}
  \caption{Example of an interval tree over the interval $[0,1]$.}
  \label{fig:tree}  
\end{figure}

\begin{figure}
  \centering
  \includegraphics[scale=0.45]{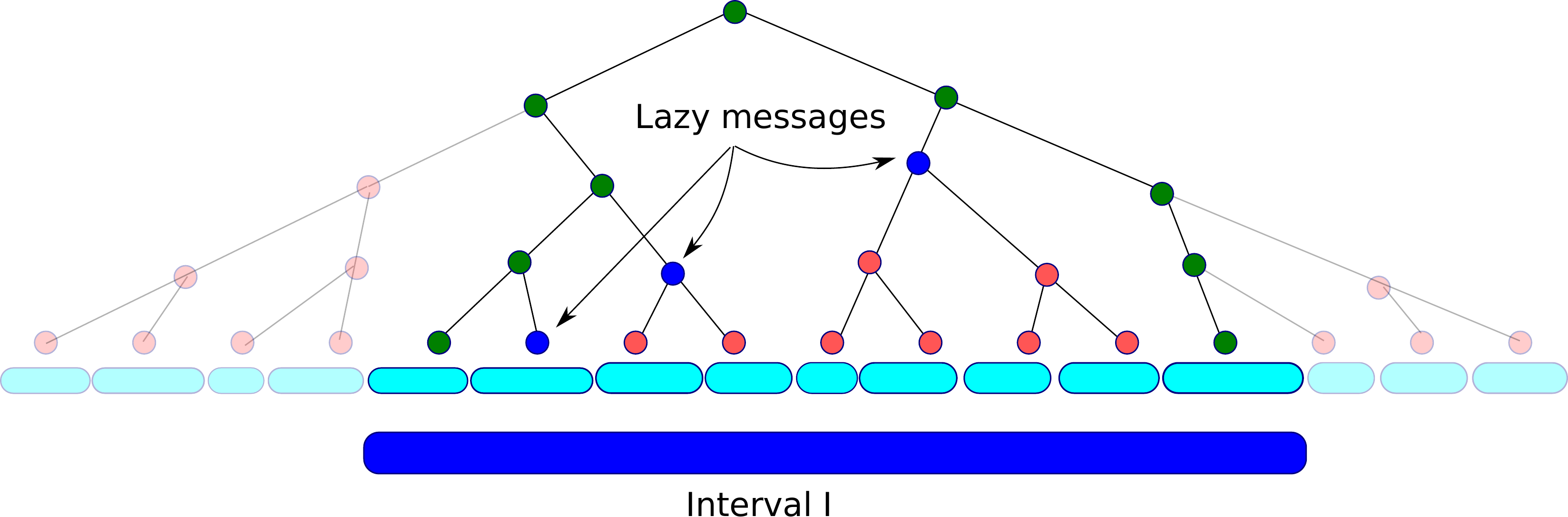}  
  \caption{Example of a call to the update procedure. The interval to update is
	  the interval I. Messages are left along the path connecting the two
	  extremities (smallest and highest intervals) of $I$. In green are the
	  nodes that get their messages updated ($m(N) = 1$ for those nodes). In
	  blue are the nodes that get a new message corresponding to $\exp(\eta
	  f_t(I))$ ($m(N) \gets m(N) \cdot \exp(\eta f_t(I))$ for those nodes). This
	  update should be propagated to all the descendants of the green nodes (the
	  red nodes), but will only be done lazily when required. The lazy approach
	  and the structure of the tree allows to bound the number of green and blue
	  vertices by $O(\log T)$. Indeed, observe that not all the nodes of the
	  subtrees corresponding to interval $I$ are updated at this step.}
  \label{fig:update}
\end{figure}

\begin{figure}
  \centering
  \includegraphics[scale=0.45]{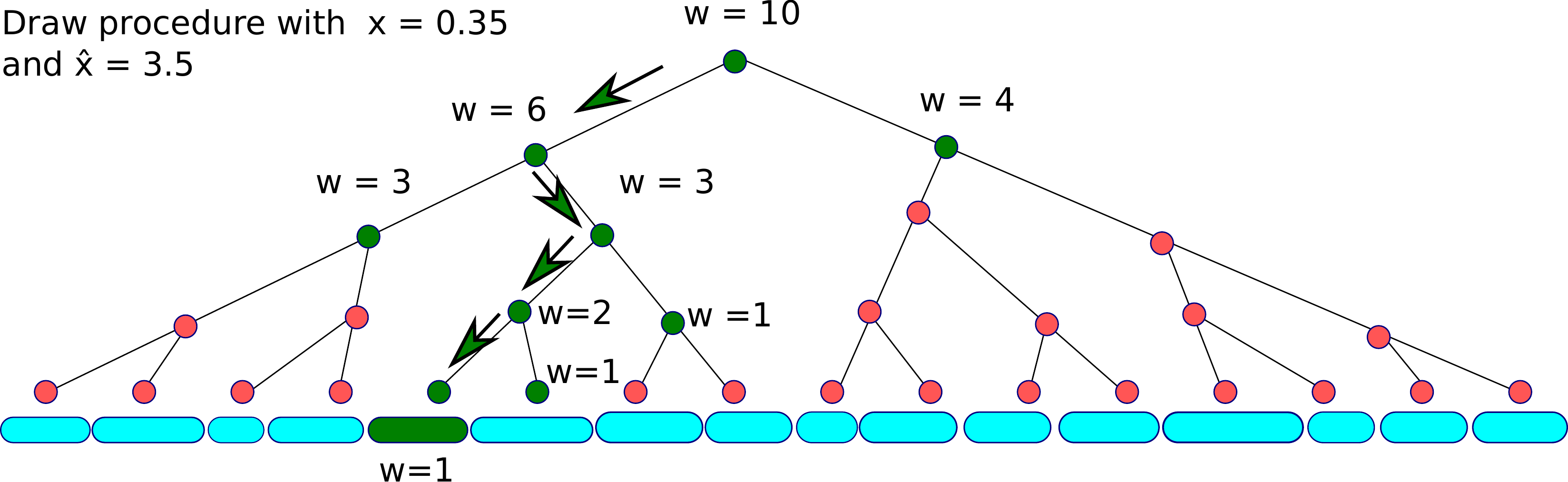}
  \caption{Example of call to the draw procedure. Assuming the value of $x$ is 0.35, the procedure moves along the
    tree based on the values of the $w(N)$ to find the interval corresponding to $p_t(x)$. The vertices in green are the vertices 
    whose pending messages are updated by the procedure (i.e.: the value of $m(N)$ for each node $N$ is $1$ after the call).}
  \label{fig:draw}  
\end{figure}

In this section, we describe a data structure that ensures that the
\emph{selection} and  \emph{update} steps (steps \ref{alg:step:select},
\ref{alg:step:update}) of Algorithm \ref{alg:expwt} have a running time of
$O(k\log (t k))$ at time $t$. 

We first describe the high level idea. In order to have an efficient
implementation, we exploit the fact that the function is piecewise constant
with $k$ pieces. We build upon a data structure called \emph{interval trees}
(see \citep{CLRS} for more details), that for any partition $P$ of the interval
$[0,1)$ into $n$ parts, maintains a binary tree with $n$ leaves that has the
following properties: 
\begin{enumerate}
	\item[(i)] The leaves are the parts of $P$. 
	\item[(ii)] Any level of the tree corresponds to a coarsening of the
		partition. More precisely, for any internal node $N$, there is an
		interval associated with $N$ which corresponds to the interval that is
		the union of the intervals associated with the children of $N$
		(see Figure \ref{fig:tree}). 
\end{enumerate}
With such a data-structure, it is possible to refine the partition (split
existing intervals), and establish membership in time $O(\log n)$, where $n$ is
the number of leaves in the tree. In our setting, $n$ will be $O(tk)$ after $t$
steps.

Here, we extend this data structure to support additional operations. Note that
$F_t$ is piecewise constant, in particular $F_t$ is constant over $I(\ell)$,
the interval defined at leaf $\ell$; by abuse of notation let $F_t(I(\ell))$
denote this constant value. For each leaf $\ell$ of the tree we will maintain a
variable $w(\ell)$ whose value is $|I(\ell)| \exp(\eta F_t(I(\ell)))$, where
$I(\ell)$ is the length of the interval corresponding to leaf $\ell$. Then, for
each internal node $N$ of the tree, whose associated interval is $I = [a,b)$,
we will maintain
	\[ w(N) = \sum\limits_{\ell \text{ leaf of the subtree}\atop \text{rooted at }
N} |I(\ell)| \exp(\eta F_t(I(\ell))) = \int_{a}^b \exp(\eta F_t(x)) dx. \]
This allows us to encode the cumulative distribution function of $p_t$ defined
at Step \ref{alg:step:distrib} of Algorithm \ref{alg:expwt}. Thus, starting
from the root and moving toward the leaves, it is possible to draw from this
distribution in time $O(\log(kt))$ (see Fig \ref{fig:draw} and the description
of the Draw procedure).

Then, at Step \ref{alg:step:update} of Algorithm \ref{alg:expwt}, we know that
the function $f_t$ is piecewise constant with $k$ pieces. For each such
\emph{piece} $I$, we need to set $F_{t + 1}(I^\prime) = F_t(I^\prime) + f_t(I)$
for each interval $I^\prime \subseteq I$ on which $F_t$ is constant. We proceed
in two steps. First, we ensure that $I$ is the disjoint union of intervals
corresponding to subtrees by splitting at most two existing intervals, those
that contain the endpoints of $I$. This operation can be implemented in
interval trees of size $n$ in time $O(\log(n))$. Then, we need to update $w(N)$
for each node $N$ of the subtrees.  Since $f_t$ is constant on the interval
$I$, we want to set $w(N) \gets w(N) \cdot \exp(\eta f_t(I))$ for each such
node $N$. However the number of such intervals may be $\Omega(tk)$ at time $t$
and doing so na\"{i}vely would result in a time complexity of $\Omega(tk)$.
Here we are aiming at time complexity $O(k \log (kT))$.  To achieve this, we
make the updates in a lazy fashion. For any interval $I$ on which $f_t$ is
constant, we consider the leaves $l$ and $h$ of the tree that contain the
extremities of $I$ (recall that membership can be computed in time $O(\log n)$
for a tree of size $n$).  We then update the values of the variables $w$ for all
the nodes along the path joining $l$ to $h$.  For each child of these nodes,
we leave a \emph{message}, that contains the value of the update for the
subtree rooted at this child. The message at a node will be applied to the node
and propagated to its children only when it is needed in the future (see Figure
\ref{fig:update}).  Thanks to the structure of the tree and since $f_t$ is
piecewise constant, we show that no information is lost through this process.

We now provide a more formal description of the data structure.  The data
structure consists of a tree where each node $N$ contains the following
information:
\begin{itemize}
  \item An \emph{interval} $I(N)$.
  \item Possibly two \emph{children} $l(N)$ and $h(N)$. Moreover, $I(l(N)) \cap I(h(N)) = \emptyset$ and $I(l(N)) \cup I(h(N)) = I(N)$ if they exist.
  \item A parent node (except for the root), $p(N)$.
  \item A \emph{weight} $w(N)$. Initially, $w(N) = |I(N)|$.
  \item A \emph{message} $m(N)$, initially $m(N) = 1$ ($m(N) = 1$ indicates that $N$ is up-to-date).
\end{itemize}

We define several operations for any given tree $\calT$.
\begin{itemize}
\item $\mathsf{UpdateMessage}(N)$: The procedure makes $N$ and its two
	children, $l(N)$ and $h(N)$ up-to-date and propagate the message toward the
	children of $l(N)$ and $h(N)$ :
  \begin{enumerate}
	  \item $w(N) \gets w(N) \cdot m(N)$
  \item $w(l(N)) \gets w(l(N) \cdot m(l(N)) \cdot m(N)$
  \item $w(h(N)) \gets w(h(N) \cdot m(h(N)) \cdot m(N)$
  \item For each child $N'$ of $l(N)$ : $m(N') \gets m(N') \cdot m(N) \cdot m(l(N))$
  \item For each child $N'$ of $h(N)$ : $m(N') \gets m(N') \cdot m(N) \cdot m(h(N))$
  \item $m(N) \gets 1$, $m(h(N)) \gets 1$, $m(l(N)) \gets 1$
  \end{enumerate}
\item $\mathsf{Insert}(I)$: The insertion procedure takes as input an interval
	$I = [l,h)$ and proceeds as the standard insertion procedure for interval
	trees in order to keep a balanced binary tree.  More precisely, it first
	splits the interval $I_0=[l_0,h_0)$ containing $l$ into two intervals $I_0^0
	= [l_0,l)$, $I_0^1[l,h_0)$ and sets $w(I_0^0) \gets |I_0^0| \cdot
	w(I_0)/|I_0|$ and $w(I_0^1) \gets |I_0^1| \cdot w(I_0)/|I_0|$.  It
	then proceeds similarly with the interval containing $h$.  In addition,
	before applying the standard insertion procedure, for every node $N$
	top-down from the root to each node that will be considered during the
	insertion, the procedure applies $\mathsf{UpdateMessage}$.  Finally, after
	the call to the standard insertion procedure, for each non-leaf node $N$
	considered by the procedure in a bottom-up fashion, it sets $w(N) \gets
	w(l(N))+w(h(N))$ and $I(N) \gets I(l(N))\cup I(h(N))$. 
\item $\mathsf{Update}(I = [l,h), w)$: Given an interval $I = [l,h)$, let
	$\ell_l$ be the leaf whose interval starts at $l$ and $\ell_h$ be the
	leaf whose interval ends at $h$. If those leaves do not exist, the
	procedure starts by calling $\mathsf{Insert}([l, h))$ to create them.
	Let $r_{l,h}$ be the lowest common ancestor of $\ell_l$ and
	$\ell_h$ in $\calT$.  Let $P_l$ be the path joining $\ell_l$ to
	$r_{l,h}$ and $P_h$ the path joining $\ell_h$ to $r_{l,h}$. Then
	the following are executed (see Figure \ref{fig:update}):
	\begin{enumerate}
		\item For every node on the path from the root to $r_{l,h}$ in a top-down
			order, apply the $\mathsf{UpdateMessage}$ procedure.
		\item For every node along the path $P_l$, from $r_{l, h}$ to $\ell_l$ in
			a top-down order, apply the $\mathsf{UpdateMessage}$ procedure.
		\item For every node along the path $P_h$, from $r_{l, h}$ to $\ell_h$ in
			a top-down order, apply the $\mathsf{UpdateMessage}$ procedure.
		\item At node $\ell_l$, update $w(\ell_l) \gets w(\ell_l) \cdot w$. For
			every internal node on the path $P_l$, \ie all nodes except for
			$\ell_l$ and $r_{l, h}$, the following updates are made in bottom-up
			order (going from $\ell_l$ to $r_{l, h}$).
			\begin{enumerate}
				\item At node $N$, if $l(N) \in P_l$, then set $w(N) \gets w(l(N))
					+ w \cdot w(h(N))$. Set $m(h(N)) \gets w$ (the earlier value of
					$m(h(N))$ was $1$ because of the $\mathsf{UpdateMessage}$
					procedure). We say that $h(N)$ has a \emph{new message}.  \smallskip \\
					{\bf Remark}: This is because the entire interval represented at
					$h(N)$ should be updated by multiplying the current weight by
					$w$. However, rather than applying this to every node in the
					subtree, we leave a message at $h(N)$ and only perform the
					updates when required in order to achieve the required time
					complexity.
				\item At node $N$, if $l(N) \not\in P_l$, then set $w(N) \gets
					w(l(N)) + \cdot w(h(N))$.
			\end{enumerate}
		\item Symmetrically, at node $\ell_h$, update $w(\ell_h) \gets w(\ell_h)
			\cdot w$. For every internal node on the path $P_h$, \ie all nodes
			except for $\ell_h$ and $r_{l, h}$, the following updates are made in
			bottom-up order (going from $\ell_h$ to $r_{l, h}$).
			\begin{enumerate}
				\item At node $N$, if $l(N) \in P_h$, then set $w(N) \gets w(l(N))
					+ \cdot w(h(N))$.
				\item At node $N$, if $l(N) \not\in P_l$, then set $w(N) \gets w
					\cdot w(l(N)) + \cdot w(h(N))$. Set $m(l(N)) \gets w$ (the
					earlier value of $m(l(N))$ was $1$ because of the
					$\mathsf{UpdateMessage}$ procedure). We say that $l(N)$ has a
					\emph{new message}.
			\end{enumerate}
		\item Finally, along every node $N$ on the path from $r_{l, h}$ to the
			root, set $w(N) \gets w(l(N)) + w(h(N))$. (Here $r_{l, h}$ and the
			root are also updated.)
  \end{enumerate}
\item $\mathsf{Draw}$: The Draw procedure starts by picking $x$ uniformly at random
	in $[0,1)$ and moves from the root toward the leaves in order to find the
	leaf interval corresponding to $x$ with respect to the distribution encoded
	by the tree $\calT$.  More precisely, the procedure starts from the root $r$
	and applies the $\mathsf{UpdateMessage}$ procedure at $r$, and then at
	$l(r)$ and $h(r)$. Then, define $\hat x = x \cdot w(r)$. The procedure
	moves to the subtree rooted at $l(r)$ if $\hat x < w(l(r))$ or to $h(r)$
	otherwise and then proceeds recursively. At a given node $N$, the procedure
	applies $\mathsf{UpdateMessage}$ at both $l(N)$ and $h(N)$. It then moves
	toward $l(N)$ if $\hat x < w(l(N))$ or toward $h(n)$ if $\hat x >
	w(h(N))$. The procedure stops when it reaches a leaf $\ell$ and returns a
	point of $I(\ell)$ uniformly at random. See Figure \ref{fig:draw}.
\end{itemize}

We use the data structure in Algorithm \ref{alg:expwt} to represent the
distributions $p_t$ (we don't need to explicitly maintain the functions $F_t$,
but if we did we could use a similar data structure). More precisely, the
algorithm starts with a tree $\calT$ containing a single node $N_0$ such that
$I(N_0) = [0,1)$ and $w(N_0) = 1$. Then at time $t$, the following are
performed
\begin{enumerate}
	\item In Step \ref{alg:step:select}, $x_t$ is drawn according to the
		$\mathsf{Draw}$ procedure described above.
	\item In Step \ref{alg:step:update}, the algorithm updates the tree
		after receiving $f_t$ as follows: for interval $I_{t, j} = [a_{t, j-1},
			a_{t, j}) $ on which $f_t$ is constant and takes value $v_{t,j}$, the
		procedure $\mathsf{Update}(I_{t, j}, \exp(\eta v_{t, j}))$ is called.
\end{enumerate}

The following claim results from the classical analysis of the complexity of interval trees (see \citep{CLRS} 
for the complete proof).
\begin{claim}
  \label{claim:complexity}
  The running time of $\mathsf{UpdateMessage}$ is $O(1)$.  Thus, by the
  definition of interval trees, the amortized running time of the
  $\mathsf{Insert}$, $\mathsf{Draw}$ and $\mathsf{Update}$ procedures is
  $O(\log (n))$ for any tree of size $n$.
\end{claim}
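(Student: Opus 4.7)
My plan is to prove the claim in two steps: first verify that $\mathsf{UpdateMessage}$ runs in constant time by inspection, and then reduce the analysis of $\mathsf{Insert}$, $\mathsf{Draw}$, and $\mathsf{Update}$ to the classical amortized $O(\log n)$ analysis of balanced interval trees (e.g.\ red--black trees), with an $O(1)$ per-node overhead for handling pending messages.

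For $\mathsf{UpdateMessage}(N)$, the six steps of its definition each touch only $N$, its two children $l(N)$ and $h(N)$, and the (at most four) grandchildren of $N$. Each touch is a single multiplication or assignment, so the procedure performs a fixed number of arithmetic operations and runs in $O(1)$ time, independent of the size of $\calT$.

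Given this, I would handle the three high-level operations as follows. The $\mathsf{Draw}$ procedure descends a single root-to-leaf path, calling $\mathsf{UpdateMessage}$ at each node and performing one comparison of $\hat{x}$ to a child weight to decide which branch to take; in a balanced tree of size $n$, the path has length $O(\log n)$, giving $O(\log n)$ total work. The $\mathsf{Insert}$ procedure is the standard balanced-interval-tree insertion (splitting at most two existing leaves and inserting at the appropriate position), which is classically $O(\log n)$ amortized; our only additions are a top-down pass of $\mathsf{UpdateMessage}$ calls along the insertion path and a bottom-up recomputation of $w(\cdot)$ along the same path, both of which contribute $O(1)$ per visited node and $O(\log n)$ overall. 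The $\mathsf{Update}(I,w)$ procedure restricts its work to three paths, from the root to $r_{l,h}$ and from $r_{l,h}$ down to $\ell_l$ and $\ell_h$; since the tree is balanced each such path has length $O(\log n)$, and each visited node performs $\mathsf{UpdateMessage}$ and a constant number of weight updates and message placements, yielding $O(\log n)$ total.

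The step that deserves some care is verifying that the lazy-message scheme is consistent with this accounting, i.e.\ that we never have to ``chase'' a message into a part of the tree we did not intend to visit. The key invariant, maintained inductively, is that before $\mathsf{Draw}$ or $\mathsf{Update}$ reads or modifies $w(N)$ for some node $N$, $\mathsf{UpdateMessage}$ has already been applied along the entire path from the root to $N$, so $m(N')=1$ for every ancestor $N'$ of $N$ and the value $w(N)$ reflects the correct integral $\int_{I(N)} \exp(\eta F_t(x))\,dx$. Each of the three procedures is explicitly written to apply $\mathsf{UpdateMessage}$ in a top-down fashion along exactly the paths it traverses, so this invariant is preserved and the $O(1)$ per-node overhead of message handling suffices. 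Combining this with the classical balanced interval tree bounds then gives the claimed amortized $O(\log n)$ running time for all three operations.
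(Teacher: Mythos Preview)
Your proposal is correct and follows essentially the same approach as the paper: the paper's own ``proof'' is simply a one-line reference to the classical interval-tree complexity analysis in CLRS, and your argument is a careful unpacking of exactly that reasoning (constant-time $\mathsf{UpdateMessage}$ plus $O(\log n)$ path lengths in a balanced tree). If anything, your write-up is more detailed than the paper's, particularly in verifying that the lazy-message invariant is compatible with the per-node accounting.
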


Let $w_t(N)$ and $m_t(N)$ denote the values of $w(N)$ and $m(N)$ at the
beginning of the $t^{\mbox{\scriptsize th}}$ iteration.  For any node $N$, define 
\[ \omega_t(N) = w_t(N) \cdot m_t(N) \cdot \prod\limits_{N' \text{ ancestor of }N} m_t(N'). \]


\begin{lemma}
  \label{lem:invariant}
  At the beginning of time step $t$ and for any node $N$,
  \begin{equation}
    \label{eq:invariant}
    \omega_t(N) = \sum_{\ell \text{ leaf descendant of }N } \omega_t(\ell).
  \end{equation}
\end{lemma}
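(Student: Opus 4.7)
The plan is to prove the identity by induction on the number of basic operations performed on the tree since initialization, which in particular yields the claim at the start of every iteration $t$. The base case is trivial: the initial tree has one node $N_0$ with $w(N_0) = |I(N_0)| = 1$ and $m(N_0) = 1$, so $\omega_1(N_0) = 1$ equals the sum over its unique leaf descendant. For the inductive step I verify that each of the basic operations $\mathsf{UpdateMessage}$, $\mathsf{Insert}$, $\mathsf{Update}$, and $\mathsf{Draw}$ either preserves every $\omega(M)$ in the tree, or modifies $\omega$ at leaves in a way that remains consistent with (\ref{eq:invariant}) at all internal nodes.

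The key structural remark is that $\omega(M)$ depends only on $w(M)$ and on the product of $m$-values over $M$ together with all of its ancestors. Hence any reshuffling of multiplicative factors among these quantities that leaves this product unchanged, as seen from any leaf, preserves every $\omega$ in the tree. Using this: $\mathsf{UpdateMessage}(N)$ preserves $\omega(M)$ for every $M$, by a short case analysis on the position of $M$ (equal to $N$, $l(N)$, $h(N)$, a grandchild of $N$, a deeper descendant of $N$, or outside $N$'s subtree), the total product $w(M) \cdot \prod m$ is unaltered in every case. $\mathsf{Insert}$ first applies $\mathsf{UpdateMessage}$ along the insertion path, so every relevant $m$ equals $1$, and then splits a leaf $I_0$ into two children with weights proportional to length and messages equal to $1$; so $\omega(I_0^0) + \omega(I_0^1) = \omega(I_0)$, and the invariant extends upward. $\mathsf{Draw}$ only calls $\mathsf{UpdateMessage}$ along a root-to-leaf path.

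The main obstacle is verifying $\mathsf{Update}(I, w)$: one must show that $\omega(\ell)$ is multiplied by $w$ for every leaf $\ell$ with $I(\ell) \subseteq I$, remains unchanged for leaves outside $I$, and that the invariant is restored at every internal node. The crucial geometric fact is that the leaves contained in $I$ form exactly the disjoint union of $\ell_l$, $\ell_h$, the off-path subtrees rooted at $h(N)$ for $N \in P_l$ with $l(N) \in P_l$, and the symmetric subtrees on $P_h$. This follows by in-order traversal reasoning once $\mathsf{Insert}$ guarantees that $\ell_l, \ell_h$ are the boundary leaves of $I$. After the preliminary $\mathsf{UpdateMessage}$ phase, every $m$-value on the root-to-$r_{l,h}$ path, along $P_l$ and $P_h$, and at all children of these path nodes equals $1$. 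Therefore multiplying $w(\ell_l)$ and $w(\ell_h)$ by $w$ scales $\omega(\ell_l)$ and $\omega(\ell_h)$ by $w$, and setting $m(h(N)) \leftarrow w$ on an off-path child introduces the factor $w$ in the ancestor-$m$ product of every descendant (in particular every descendant leaf), yielding exactly the desired multiplicative scaling.

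It then remains to check that the bottom-up recomputations of $w$ along $P_l$, along $P_h$, and from $r_{l,h}$ up to the root restore $\omega(N) = \omega(l(N)) + \omega(h(N))$ at every internal node visited. This is a straightforward inner induction on the bottom-up order: at each such $N$ both $m(N)$ and the product of $m$'s on $N$'s ancestor path equal $1$, so $\omega(N) = w(N)$, while by the inner hypothesis $\omega(l(N))$ and $\omega(h(N))$ have already been computed correctly with the new values of $w$ and the new message (if any) at the off-path child. The rules $w(N) \leftarrow w(l(N)) + w \cdot w(h(N))$ in case (a) and $w(N) \leftarrow w(l(N)) + w(h(N))$ in case (b) then match $\omega(l(N)) + \omega(h(N))$ exactly. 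The most delicate point in the whole argument is the bookkeeping: tracking at each moment which $m$-values are equal to $1$, and verifying that the off-path messages introduced in steps 4--5 do not interfere with the final root-ward pass of step 6, since they lie strictly outside the paths being updated.
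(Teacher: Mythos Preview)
Your proof is correct and achieves the same conclusion, but the organization differs from the paper's. The paper proceeds by a double induction: outer induction on the level of $N$ in the tree, reducing the statement to $\omega_t(N) = \omega_t(l(N)) + \omega_t(h(N))$, and inner induction on $t$. The inner step is then discharged by a case analysis based on the \emph{distance} of $N$ from the paths touched by the procedure at time $t-1$ (distance $\geq 3$, distance $2$, distance $1$, on the path, or having an ancestor that received a new message), arguing in each case that $\omega_t(N) = \omega_{t-1}(N)$ (possibly times $w$) and invoking the induction hypothesis. Your argument instead inducts on the sequence of operations and isolates the lemma that $\mathsf{UpdateMessage}$ preserves every $\omega(M)$ in the tree; this single observation subsumes the paper's three distance cases for $\mathsf{Draw}$ and the preparatory phases of $\mathsf{Insert}$ and $\mathsf{Update}$, leaving only the genuinely new work (the leaf split, the new off-path messages, and the bottom-up $w$-recomputation) to verify directly. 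The trade-off is that your decomposition is more modular and reusable (the $\omega$-invariance of $\mathsf{UpdateMessage}$ is the ``right'' abstraction for lazy propagation), whereas the paper's distance-based bookkeeping stays closer to the time-indexed statement of the lemma and makes the role of the ancestor-message product more explicit at each step.
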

\begin{proof}
  We show this by induction on the level $i$ of $N$ in the tree. The Lemma
  holds for any leaf $\ell$ at level $i=0$.
  Suppose it holds up to level $i-1$ and consider a node at level $i$.
  We show, by induction on $t$, that for any node $N$ of level $i$, $\omega_{t}(N) = \omega_{t}(l(N)) + \omega_{t}(h(N))$.
  If this is true, then by induction hypothesis Equation \ref{eq:invariant} holds.
  This is true for $t=1$ as the only node is a leaf.
  We assume that this holds up to $t-1$ and show that it holds for time $t$. Note that any call to $\mathsf{UpdateMessage}$ is made
  through a call to the $\mathsf{Draw}$, $\mathsf{Insert}$, or $\mathsf{Update}$ procedures, so we only consider that a call to one of those three procedures occurred at time $t$.
  Observe that in those three procedures, the $\mathsf{UpdateMessage}$ procedure is applied top-down from the root toward one (in the case
  of the Draw procedure) or two (in the case of the $\mathsf{Insert}$ and $\mathsf{Update}$ procedures) leaves.
  We consider a node $N$ at distance at least one of all the paths followed by $\mathsf{UpdateMessage}$ and such that none of its ancestors received
  a new message at time $t$. We show that the lemma holds for $N$.
  First if $N$ is at distance at least 3 from the paths, we have
  that $\prod_{N' \text{ ancestor of } N} m_{t-1}(N') = \prod_{N' \text{ ancestor of } N} m_t(N')$ and $\omega_{t-1}(N) = \omega_{t}(N)$
  since the messages are applied top-down and by definition of the $\mathsf{UpdateMessage}$ procedure and none of its ancestors received a new message.
  This holds for any descendant $N'$ of $N$ and so $\omega_{t}(N) = \omega_{t}(l(N)) + \omega_{t}(h(N))$ by induction hypothesis.

  Now suppose that $N$ is at distance 2 from the paths. We have that
  $$m_{t-1}(N) \cdot \prod_{N' \text{ ancestor of } N} m_{t-1}(N') = m_t(p(N)) \cdot m_t(N).$$
  Additionally, $w_t(N) = w_{t-1}(N)$. Thus, $\omega_{t}(N) = \omega_{t-1}(N)$. Since, by the above discussion,
  this also holds for $l(N)$ and $h(N)$. We have by induction hypothesis $\omega_{t}(N) = \omega_{t}(l(N)) + \omega_{t}(h(N))$.

  We now consider a node $N$ at distance exactly 1 from the paths. 
  Since the $\mathsf{UpdateMessage}$ are applied top-down we have and none of its ancestor received a new message,
  $w_t(N) \cdot m_t(N) = w_{t-1}(N) \cdot m_{t-1}(N) \cdot  \prod_{N' \text{ ancestor of } N} m_{t-1}(N') = \omega_{t-1}(N)$.
  We observe that $\prod_{N' \text{ ancestor of } N} m_{t}(N') = 1$. Thus, $\omega_t(N) = \omega_{t-1}(N)$.
  Since $l(N)$ and $h(N)$ are at distance 2, $\omega_{t}(N) = \omega_{t}(l(N)) + \omega_{t}(h(N))$ by induction hypothesis.

  We now turn to prove that $\omega_{t}(N) = \omega_{t}(l(N)) + \omega_{t}(h(N))$ for the remaining nodes, \ie for the
  nodes in the paths and for the nodes for which an ancestor received a new message.
  Suppose first the Draw procedure was called at time $t$. We consider the path of nodes $P$ taken by the procedure.
  For any node in the path we have 
  $w_t(N) = w_{t-1}(N) \cdot m_{t-1}(N) \cdot  \prod_{N' \text{ ancestor of } N} m_{t-1}(N') = \omega_{t-1}(N)$.
  Remark that $\prod_{N' \text{ ancestor of } N} m_{t}(N') = m_t(N)= 1$. Thus, $\omega_t(N) = \omega_{t-1}(N)$.
  Now, by the above discussion, the $\omega_t(l(N)) = \omega_{t-1}(l(N))$ and $\omega_t(h(N)) = \omega_{t-1}(h(N))$.
  Thus, $\omega_{t}(N) = \omega_{t}(l(N)) + \omega_{t}(h(N))$
  
  Since the procedure also applies $\mathsf{UpdateMessage}$ to $l(N)$ and $h(N)$, we have $m(l(N)) = m(h(N)) = 1$.
  Thus, $\omega_t(l(N)) = w(l(N))$ and $\omega_t(h(N)) = w_t(h(N))$.
  Observe that the procedure sets $w_t(N) \gets w_t(h(N)) + w_t(l(N))$ and therefore, $\omega_{t}(N) = \omega_{t}(l(N)) + \omega_{t}(h(N))$.

  We now assume that a call to the $\mathsf{Insert}$ procedure occurred at time $t$.
  Since $N$ is in the path we have that $\prod_{N' \text{ ancestor of } N} m_{t}(N') = m_t(N)= 1$ and 
  the procedure sets $w_t(N) \gets w_t(l(N)) + w_t(h(N))$.
  Since the procedure also applies $\mathsf{UpdateMessage}$ to $N$, we have $m(l(N)) = m(h(N)) = 1$.
  Thus, $\omega_t(l(N)) = w(l(N))$ and $\omega_t(h(N)) = w_t(h(N))$.
  Observe that the procedure sets $w_t(N) \gets w_t(h(N)) + w_t(l(N))$ and therefore, $\omega_{t}(N) = \omega_{t}(l(N)) + \omega_{t}(h(N))$.
    
  Therefore, we turn to the case where an update on an interval $I_0$ occurred at time $t$, \ie a call to function
  $\mathsf{Update}$ for an interval $I=[l,w]$ and a value $w$.
  Observe that for any node $N$ such that $I(N) \cap I = \emptyset$, $N$ is not on the paths and none of the ancestors of $N$ received
  a new message.
  Thus, consider a node $N$ such that $I(N) \cap I \neq \emptyset$.  
  Suppose $N$ is in $P_{l}$ or in $P_{h}$ or in the path between the root and $r_{l,h}$.
  Since $\mathsf{UpdateMessage}$ in applied top-down to all the ancestors of $N$ and to $N$ we have 
  $\prod_{N' \text{ ancestor of }N} m(N') = m(N) = m(l(N)) = m(h(N)) = 1$, and so,
  $\omega_t(N) = w_t(N)$.
  Moreover, by definition of the procedure $w_t(N) = w_t(l(N)) + w_t(h(N)) = \omega_t(l(N)) + \omega_t(h(N))$
  and thus, Equation \ref{eq:invariant} holds by induction hypothesis on the level of $N$.

  Now suppose $N$ is not in $P_{l}$ or in $P_{h}$ or in the path between the root and $r_{l,h}$.
  Consider the lowest ancestor $N'$ of $N$ in $P_l$ or $P_h$ and suppose, w.l.o.g, that $N' \in P_l$.
  Then since $N \notin P_l$ we have that $l(N') \in P_l$ and $h(N')$ is either $N$ or an ancestor of $N$.
  In both cases we have that $\omega_t(N) = \omega_{t-1}(N) \cdot w$. Now, if $N$ has a descendant $N''$, we have
  that $\omega_t(N'') = \omega_{t-1}(N'')  \cdot w$ as well. Thus $\omega_t(N) = \omega_t(l(N)) + \omega_t(h(N))$.
\end{proof}

\begin{lemma}
  \label{lem:treeCDF}
  The Draw procedure produces a point $p$ according to the distribution described at Step \ref{alg:step:distrib} of Algorithm \ref{alg:expwt}. 
\end{lemma}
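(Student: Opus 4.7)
The plan is to show that the tree, viewed through the effective weights $\omega_t(\cdot)$, exactly encodes the (unnormalized) cumulative distribution function of $p_t$, and that the $\mathsf{Draw}$ procedure performs the standard inverse-CDF transform on this encoding. The proof decomposes into three steps.

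First, I would prove by induction on $t$ that for every leaf $\ell$ present at the beginning of round $t$,
\[ \omega_t(\ell) \;=\; |I(\ell)|\,\exp(\eta F_t(I(\ell))). \]
The base case $t=1$ is immediate from the initialization $w(N_0)=1 = |I(N_0)|$ and $F_1\equiv 0$. For the inductive step I would analyze what $\mathsf{Update}(I_{t,j},\exp(\eta v_{t,j}))$ does to a leaf $\ell$ with $I(\ell)\subseteq I_{t,j}$: the calls to $\mathsf{UpdateMessage}$ plus the new messages deposited along $P_l$ and $P_h$ multiply the effective weight $\omega(\ell)$ by exactly $\exp(\eta v_{t,j})$, while leaves outside $I_{t,j}$ are unaffected (splits performed by $\mathsf{Insert}$ preserve the invariant by construction, since they split $w$ proportionally to interval length while the exponential factor $\exp(\eta F_t)$ is constant on the split interval). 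Combined with $F_{t+1} = F_t + f_t$, this gives the invariant at time $t+1$.

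Second, I would combine this with Lemma \ref{lem:invariant} to conclude that for every node $N$,
\[ \omega_t(N) \;=\; \sum_{\ell \text{ leaf descendant of }N} |I(\ell)|\,\exp(\eta F_t(I(\ell))) \;=\; \int_{I(N)} \exp(\eta F_t(x))\,dx. \]
In particular, $\omega_t(r) = \int_0^1 \exp(\eta F_t(x))\,dx$ is the normalizing constant of $p_t$.

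Third, I would analyze the $\mathsf{Draw}$ procedure itself. At every node $N$ it visits, the procedure first calls $\mathsf{UpdateMessage}$ at both children, which by definition of $\mathsf{UpdateMessage}$ forces $m(l(N))=m(h(N))=1$ and makes all ancestral pending messages be absorbed into $w(l(N))$ and $w(h(N))$; hence at the moment of comparison $w(l(N))=\omega_t(l(N))$ and $w(h(N))=\omega_t(h(N))$. Because $\hat x = x\cdot w(r) = x\cdot \omega_t(r)$ with $x$ uniform on $[0,1)$, the descent $\hat x < w(l(N))$ vs.\ $\hat x \ge w(l(N))$ is precisely the binary-search step of the inverse-CDF method on the step distribution with cell masses $\omega_t(\ell)$. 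Consequently the probability of ending at a specific leaf $\ell$ is $\omega_t(\ell)/\omega_t(r)$, and conditional on reaching $\ell$ the returned point is uniform on $I(\ell)$. Multiplying these together and applying the formula from the second step, the density of the returned point at $x\in I(\ell)$ is
\[ \frac{\omega_t(\ell)}{\omega_t(r)}\cdot \frac{1}{|I(\ell)|} \;=\; \frac{\exp(\eta F_t(I(\ell)))}{\int_0^1 \exp(\eta F_t(y))\,dy} \;=\; p_t(x), \]
which is the claim. The main obstacle I anticipate is the first step: carefully verifying that the lazy-message bookkeeping in $\mathsf{Update}$ (together with the proportional splitting in $\mathsf{Insert}$) multiplies each affected leaf's effective weight by exactly $\exp(\eta v_{t,j})$ and leaves all other leaves untouched; once this invariant is established the rest of the argument is essentially the textbook correctness proof of an inverse-CDF sampler on a segment tree.
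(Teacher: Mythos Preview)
Your proposal is correct and follows essentially the same route as the paper: establish the leaf invariant $\omega_t(\ell)=|I(\ell)|\exp(\eta F_t(I(\ell)))$ by induction on $t$, lift it to all nodes via Lemma~\ref{lem:invariant}, and then observe that the $\mathsf{Draw}$ procedure sees $w(N)=\omega_t(N)$ along its path so that it implements inverse-CDF sampling. Your third step is somewhat more explicit than the paper's (which just appeals to Lemma~\ref{lem:invariant} at the end), and you also call out the proportional splitting in $\mathsf{Insert}$ that the paper glosses over, but the decomposition and key invariant are the same.
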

\begin{proof}
  We start by showing the following invariant:

  \emph{At any timestep $t$, for any leaf $\ell$, for any $p \in I(\ell)$,}
  \begin{equation}
    \label{eq:leaves}
    \omega_t(\ell) = |I(\ell)| \exp(\eta F_t(p)).
  \end{equation}

  We proceed by induction on $t$. Note that for $t =1$, it holds by definition. Suppose now that this holds up to time $t-1$ and
  consider time $t$. At time $t-1$ a call to the $\mathsf{Update}$ procedure was made, as otherwise by induction hypothesis, since
  no new message was inserted the invariant holds.
  Let $I=[l,h]$ be the interval that was updated and $\exp(\eta f_t(p))$ be the value of the update for some $p \in I$.
  Observe that, by the definition of the $\mathsf{Update}$ procedure, if a leaf $\ell$ is such that  $I(\ell) \cap I = \emptyset$,
  then $\omega_t(N) = \omega_{t-1}(N)$.

  Thus, consider a leaf $\ell$ such that $I(\ell) \cap I \neq \emptyset$.
  Suppose first $l \in I(\ell)$ or $h \in I(\ell)$. Then the procedure sets
  $w(\ell) \gets w(\ell) \cdot \exp(\eta f_t(p))$ and since $\prod_{N' \text{ ancestor of }N} m(N') = m(N) = m(l(N)) = m(h(N)) = 1$,
  $w_t(\ell) = \omega_t(\ell)$.
  By Induction hypothesis, this means that
  $\omega_t(\ell) = |I(\ell)| \exp(\eta F_{t-1}(p)) \cdot \exp(\eta f_t(p)) = |I(\ell)| \exp(\eta (F_{t-1}(p) + f_t(p)))$.
  Since $F_t = F_{t-1} + f_t$, $\omega_t(\ell) = |I(\ell)| \exp(\eta (F_{t}(p))$.
  
  Now consider a leaf $\ell$ such that $l \notin I(\ell)$ and $h \notin I(\ell)$ and $I(\ell) \cap I \neq \emptyset$.
  It follows that $\ell$ has an ancestor in either $P_l$ or $P_h$.
  Consider the lowest ancestor $N$ of $\ell$ in $P_l$ or $P_h$ and suppose, w.l.o.g, that $N \in P_l$.
  Observe $\prod_{N' \text{ ancestor of }N} m(N') = m(N) = m(l(N)) = m(h(N)) = 1$.
  Then since $\ell \notin P_l$ we have that $l(N) \in P_l$ and $h(N)$ is either $\ell$ or an ancestor of $\ell$.
  In both cases we have that $\omega_t(\ell) = \omega_{t-1}(\ell) \cdot \exp(\eta f_t(p))$.  
  Again, since $F_t = F_{t-1} + f_t$, $\omega_t(\ell) = |I(\ell)| \exp(\eta (F_{t}(p))$.
  We conclude that the invariant holds.

  Now, observe that $F_t(x)$ is constant on any interval $I$ such that there exists a leaf $\ell$ with $I(\ell) = I$. Denote by $p_{\ell}$ an
  arbitrary point of $I(\ell)$.
  Hence, combining Equations \ref{eq:invariant} and \ref{eq:leaves}, for any node $N$,
  $$\omega_t(N) = \sum\limits_{\ell \text{ leaf of the subtree}\atop \text{rooted at } N} |I(\ell)| \exp(\eta F_t(p_{\ell})) = \int_{a}^b \exp(\eta F_t(x)) dx,$$
  where $I(N) = [a,b]$.
  Moreover, remark that when the Draw procedure reaches a node $N$, for each ancestor $N'$, we have $m(N') = 1 = m(N)$.
  It follows that we have $\omega_t(N) = w_t(N)$.
  Therefore, by Lemma \ref{lem:invariant},
  the procedure draws a point $p$ according to the distribution described at Step \ref{alg:step:distrib} of Algorithm \ref{alg:expwt}.
\end{proof}

From the previous Lemma and Claim \ref{claim:complexity}, we deduce the following Theorem.

\begin{theorem}
  \label{thm:complexity}
  The \emph{decision step} (step \ref{alg:step:select}) of Algorithm \ref{alg:expwt} can be performed in time $O(\log T k)$.
  Moreover the \emph{update step} (step \ref{alg:step:update}) can be performed in time $O(k  \log (T k))$.
\end{theorem}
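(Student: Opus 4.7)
The plan is to reduce both bounds to the amortized $O(\log n)$ cost of the primitive operations asserted in Claim \ref{claim:complexity}, after first controlling the size $n$ of the tree $\calT$.

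First I would bound $n$. The tree starts with a single node, and each call to $\mathsf{Insert}$ creates at most two new leaves (by splitting the leaves that contain the endpoints $l$ and $h$ of the inserted interval). Since $\mathsf{Insert}$ is only invoked from within an $\mathsf{Update}$ call, and in round $t$ we call $\mathsf{Update}$ exactly $k$ times (once per piece of $f_t$), the total number of leaves created through round $T$ is at most $1 + 2kT = O(kT)$. Hence $n = O(kT)$ throughout, and any $O(\log n)$ bound becomes $O(\log(kT))$.

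For the decision step, correctness of $\mathsf{Draw}$ in sampling from $p_t$ is exactly Lemma \ref{lem:treeCDF}. For its running time, $\mathsf{Draw}$ follows a single root-to-leaf path, invoking $\mathsf{UpdateMessage}$ at each visited node (and its two children). By Claim \ref{claim:complexity}, each call to $\mathsf{UpdateMessage}$ is $O(1)$, and since the tree is kept balanced as a standard interval tree, the path length is $O(\log n) = O(\log(kT))$, yielding the stated $O(\log(kT))$ bound for step \ref{alg:step:select}.

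For the update step, the algorithm calls $\mathsf{Update}(I_{t,j}, \exp(\eta v_{t,j}))$ once for each of the $k$ pieces of $f_t$. A single $\mathsf{Update}$ call performs at most one $\mathsf{Insert}$, identifies the two endpoint leaves $\ell_l, \ell_h$ and their lowest common ancestor $r_{l,h}$, applies $\mathsf{UpdateMessage}$ top-down along the path from the root to $r_{l,h}$ and then along the two paths $P_l, P_h$, and finally performs $O(1)$ arithmetic work at each node of these paths (including writing the new messages at the off-path children $h(N)$ and $l(N)$ flagged in the description). Every one of these nodes lies on a root-to-leaf path, so the total number of operations per $\mathsf{Update}$ call is $O(\log n)$, which is exactly the amortized bound recorded in Claim \ref{claim:complexity}. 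Summing the $k$ calls gives $O(k \log(kT))$ for step \ref{alg:step:update}.

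The only subtlety worth flagging is the laziness of message propagation: a priori one might fear that a single $\mathsf{Update}$ on an interval spanning many subtrees could force work proportional to the number of affected leaves. This is precisely what the $\mathsf{UpdateMessage}$ scheme prevents: updates are deposited as multiplicative messages at the $O(\log n)$ off-path children encountered along $P_l$ and $P_h$, and Lemma \ref{lem:invariant} guarantees that the quantity $\omega_t(N) = w_t(N) \cdot m_t(N) \cdot \prod_{N' \text{ ancestor of } N} m_t(N')$ still represents the correct aggregate. This is the main thing one has to trust; with it in hand, the complexity bound follows immediately from the standard interval tree analysis cited in Claim \ref{claim:complexity}.
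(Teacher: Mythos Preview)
Your proposal is correct and takes essentially the same approach as the paper: the paper's proof is the single sentence ``From the previous Lemma and Claim~\ref{claim:complexity}, we deduce the following Theorem,'' and your argument simply spells out the bound $n=O(kT)$ on the tree size and then applies Claim~\ref{claim:complexity} and Lemma~\ref{lem:treeCDF} exactly as intended. Your treatment is more explicit than the paper's, but there is no difference in substance.
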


\section{Bandit Setting}

In the bandit setting, we only observe the value $f_t(x_t)$. Thus, we need to
estimate $f_t$ elsewhere. This is made difficult by the fact that $f_t$ may
have discontinuities. We construct an estimator $\hat{f}_t$ which takes an
appropriately re-scaled value on a small interval around $x_t$ and is $0$
elsewhere. If this chosen interval is too large, then it is quite likely that a
point of discontinuity of $f_t$ lies in this interval. If it is small and
no point of discontinuity of $f_t$ lies in the interval, then $\hat{f}_t$ is an
unbiased estimator of $f_t$; however, choosing too small an interval may make
$\hat{f}_t(x_t)$ very large. Tuning the length of the interval and a careful
analysis gives us a regret bound of $\tilde{O}(T^{2/3})$. 


\begin{algorithm}
	\alginput: $\eta,~\mu,~\gamma$ all positive and satisfying $1/\mu \in \naturals$, $\gamma \leq \tfrac{1}{2}$, $\eta \leq \gamma \mu$ \smallskip \\
  \algset~$\calI = \langle [(i-1)\mu, i \mu) \rangle_{i = 1}^{1/\mu}$ be a family of intervals. \smallskip \\
  \algset~$w_1(x) = 1$ to be the constant $1$ function over $[0, 1)$ \smallskip \\
  \algfor~ $t = 1, 2, \ldots, T$ \algdo
  \begin{enumerate}
  \item Define $p_t(x) = (1-\gamma) \frac{w_t(x)}{\int_0^1 w_t(x) dx} + \gamma$
  \item Pick $x_t \sim p_t$ \label{alg:B:step:select}
  \item Let $I_t$ be the interval of $\calI$ that contains $x_t$.
  \item Observe function $f_t(x_t)$. Receive payoff $f_t(x_t)$.
  \item Set $\hat f_t(x) = \frac{f_t(x)}{p_t(I_t)}$ for all $x \in I_t$ and $\hat f_t(x) = 0$ for all $x \in [0,1) \setminus I_t$, \\
    Here for interval $I$, $p_t(I) = \Pr_{x \sim p_t}( x \in I)$.
  \item Set $w_{t+1}(x) = w_{t}(x) \cdot \exp(\eta \hat f_t(x))$ for all $x \in [0,1)$.
  \end{enumerate}
  \caption{Bandit Algorithm \label{alg:Bwt}}
\end{algorithm}

\begin{theorem}
The expected regret of Algorithm~\ref{alg:Bwt} is bounded by 
\[ 2 \gamma T + 2 \frac{\eta}{\mu} T + \frac{1}{\eta}
	\ln\left({\mu}^{-1} \right) + k \sigma \mu T \] 
The expectation is taken with respect to the random choices made by the
algorithm as well as those by the adversary/nature. For suitable
choices of $\mu$, $\gamma$, and, $\eta$, this gives a regret bound of
$O(\poly(k, \sigma, \log(T)) T^{2/3})$.
\end{theorem}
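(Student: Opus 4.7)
My approach is to mirror the continuous \textsc{Exp3} analysis, treating the partition $\calI$ (of size $1/\mu$) as the expert set; the $\gamma$-mixture guarantees $p_t(I_t) \ge \gamma\mu$ for every interval $I\in\calI$, which in turn keeps the importance-weighted estimator bounded by $1/(\gamma\mu)$. Set $W_t = \int_0^1 w_t(x)\,dx$ and $q_t(x) = w_t(x)/W_t$, so that $p_t = (1-\gamma)q_t + \gamma$. Since $\eta\hat f_t(x)\le \eta/(\gamma\mu)\le 1$ by the assumption $\eta\le\gamma\mu$, the inequality $e^z\le 1+z+z^2$ for $z\in[0,1]$ combined with $\ln(1+z)\le z$ and telescoping yields the standard potential upper bound
\[
\ln\frac{W_{T+1}}{W_1} \;\le\; \eta\sum_{t=1}^T \int_0^1 q_t(x)\hat f_t(x)\,dx \;+\; \eta^2 \sum_{t=1}^T \int_0^1 q_t(x)\hat f_t(x)^2\,dx .
\]

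For the lower bound, let $x^*$ be a maximizer of $F_{T+1}$, let $I^*\in\calI$ be the (deterministic, length-$\mu$) interval containing $x^*$, and write $\hat F_{T+1}=\sum_t \hat f_t$ by analogy with $F_{T+1}$. Jensen's inequality applied to $\exp$ on $I^*$ gives $W_{T+1}\ge\mu\exp((\eta/\mu)\int_{I^*}\hat F_{T+1})$ and hence $\ln\mu + (\eta/\mu)\int_{I^*}\hat F_{T+1}(x)\,dx \le \ln(W_{T+1}/W_1)$. Combining the two sides, dividing by $\eta$, and taking expectations with respect to both sources of randomness, unbiasedness $\E[\hat f_t(x)] = f_t(x)$ replaces $\hat F_{T+1}$ with $F_{T+1}$ on the left. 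On the right, plugging $q_t=(p_t-\gamma)/(1-\gamma)$ and conditioning on the history at round $t$ gives $\sum_t \E[\int q_t\hat f_t] \le P(\Alg)/(1-\gamma)$, while the second-moment term is bounded by first integrating in $x$ to obtain $\int_I q_t\hat f_t^2\le 1/((1-\gamma)p_t(I))$, then averaging over $I_t$ (sampled with probability $p_t(I_t)$) and summing over $I\in\calI$ to obtain $\sum_t \E[\int q_t\hat f_t^2]\le T/(\mu(1-\gamma))$. The result is the intermediate inequality
\[
\frac{1}{\mu}\,\E\!\left[\int_{I^*} F_{T+1}(x)\,dx\right] \;\le\; \frac{P(\Alg)}{1-\gamma} \;+\; \frac{\eta T}{\mu(1-\gamma)} \;+\; \frac{\ln(1/\mu)}{\eta}.
\]

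The remaining step, and the main obstacle, is to pass from the average of $F_{T+1}$ over $I^*$ to the pointwise value $F_{T+1}(x^*)$. For each round $t$ the integrand $f_t(x^*) - (1/\mu)\int_{I^*} f_t$ vanishes unless some discontinuity $a_{t,j}$ of $f_t$ falls inside $I^*$, and even then is bounded by $1$ since $f_t\in[0,1]$. Letting $B(I^*)$ denote the number of such ``bad'' rounds, we get $F_{T+1}(x^*)-(1/\mu)\int_{I^*}F_{T+1}\le B(I^*)$, and by the smoothness assumption each $a_{t,j}$ lies in any fixed length-$\mu$ interval with probability at most $\sigma\mu$, so a union bound over $j$ and $t$ gives $\E[B(I)]\le k\sigma\mu T$ for every fixed $I\in\calI$. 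The subtlety is that $I^*$ itself depends on the adversary's random draws; this can be handled either by a conditioning argument or, more robustly, by a Chernoff-plus-union bound over the $1/\mu$ intervals of $\calI$, which concentrates each $B(I)$ around its mean at the cost of an absorbed $\log(T/\mu)$ factor. Adding $\E[B(I^*)]\le k\sigma\mu T$ to the intermediate inequality and using $P(\Alg)\le T$, $1/(1-\gamma)\le 2$, and $\gamma/(1-\gamma)\le 2\gamma$ (from $\gamma\le 1/2$) yields the claimed bound; finally, the choices $\gamma=\mu=T^{-1/3}$ and $\eta=T^{-2/3}$ respect $\eta\le\gamma\mu$ and $\gamma\le 1/2$ for large $T$ and balance the four terms to give expected regret $\poly(k,\sigma,\log T)\,T^{2/3}$.
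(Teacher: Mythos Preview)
Your overall potential-function skeleton matches the paper's proof, but there is a genuine error in the step where you invoke ``unbiasedness $\E[\hat f_t(x)] = f_t(x)$.'' This is false: from the definition of $\hat f_t$ one computes
\[
\E_{x_t\sim p_t}\!\bigl[\hat f_t(x)\bigr] \;=\; \frac{1}{p_t(I(x))}\int_{I(x)} p_t(y)\,f_t(y)\,dy,
\]
which equals $f_t(x)$ only when $f_t$ happens to be constant on the interval $I(x)\in\calI$ containing $x$. You therefore cannot pass from $(1/\mu)\int_{I^*}\hat F_{T+1}$ to $(1/\mu)\int_{I^*}F_{T+1}$ in expectation, and the quantity you subsequently bound, $F_{T+1}(x^*) - (1/\mu)\int_{I^*}F_{T+1}$, is not the one that actually appears in the argument.

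The fix is simple and is exactly what the paper does. First, Jensen is unnecessary: by construction $\hat f_t$ is \emph{constant} on every interval of $\calI$ (it equals $f_t(x_t)/p_t(I_t)$ on $I_t$ and $0$ elsewhere), so $(1/\mu)\int_{I^*}\hat F_{T+1} = \hat F_{T+1}(x^*)$ exactly, and the lower bound on $\ln(W_{T+1}/W_1)$ is already $\eta\,\hat F_{T+1}(x^*)+\ln\mu$. The discrepancy one must control is then $\sum_t\bigl(f_t(x^*) - \E[\hat f_t(x^*)\mid\F_{adv},\calH_{t-1}]\bigr)$. By the display above each summand vanishes whenever $f_t$ is constant on $I^*$ and is at most $1$ otherwise, so it is bounded by the indicator that some discontinuity $a_{t,j}$ lands in $I^*$; the density bound then yields the $k\sigma\mu T$ term. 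This is precisely your ``bad-round'' mechanism, just applied to the correct quantity $f_t(x^*)-\E[\hat f_t(x^*)]$ rather than to $f_t(x^*)-(1/\mu)\int_{I^*}f_t$. Your flag about the dependence of $I^*$ on nature's draws is apt; the paper handles it by arguing that even an adversary who could target $I(x^*)$ is capped at hit probability $k\sigma\mu$ per round by the density constraint, while your Chernoff-plus-union-bound route would also work at the cost of a harmless extra logarithmic factor.
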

\begin{proof} 
We follow the analysis as in the expert and semi-bandit cases. Let $W_t = \int_0^1 w_t$. We show that:
 \begin{align}
    \ln\left(\frac{W_{t + 1}}{W_{t}}\right) &\le \frac{\eta}{1-\gamma} \int_0^1 p_t(x) \hat f_t(x) dx + \frac{(e-2)\eta^2}{1-\gamma}\int_0^1 p_t(x) \hat f_t(x)^2 dx 
    \label{eq:S:UB_w}
  \end{align}
  
  By Definition of $W_t$,
  \begin{align}
    \frac{W_{t+1}}{W_{t}} &= \frac{\int_0^1 w_t(x) \exp(\eta \hat f_{t}(x)) dx}{\int_0^1 w_t(x) dx} \nonumber \\
	 &= \frac{1}{1-\gamma} \int_0^1 (p_t(x)- \gamma) \exp(\eta \hat f_{t}(x)) dx
	 \label{alignarray-step1} \\
	 &\le \frac{1}{1-\gamma} \int_0^1 (p_t(x)- \gamma)( 1+ \eta \hat f_t (x) + (e-2)\eta^2 \hat f_t(x)^2) dx \label{alignarray-step2} \\
	 &\le 1 + \frac{\eta}{1-\gamma} \int_0^1 p_t(x) \hat f_t(x) dx + \frac{(e-2)\eta^2}{1-\gamma} \int_0^1 p_t(x) \hat f_t(x)^2 dx \label{alignarray-step3} 
  \end{align}
  Above (\ref{alignarray-step1}) holds since $p_t(x) = (1-\gamma)
	\frac{W_t(x)}{\int_0^1 W_t(x) dx} + \gamma$, (\ref{alignarray-step2}) holds
	as $\exp(z) \leq 1 + z + (e - 2) z^2$ for $z \in [0, 1]$---the requirement
	that $\eta \leq \gamma \mu$ implies that this is always satisfied, since
	$p(I_t) \geq \gamma \mu$ and $f_t(x_t) \leq 1$. Taking logarithms of both
	sides in (\ref{alignarray-step3}) and using the fact that $\ln(1 + z) \leq
	z$ we get (\ref{eq:S:UB_w}).

  Thus, by summing the inequality (\ref{eq:S:UB_w}) for $t = 1, \ldots, T$, we get
\begin{align}
  \label{eq:S:UB_w2}
\ln\left(\frac{W_{T + 1}}{W_1}\right) \leq \frac{\eta}{1-\gamma} \sum\limits_{t=1}^T \int_0^1 p_t(x) \hat f_t(x) dx + \frac{(e-2)\eta^2}{1-\gamma}
\sum\limits_{t=1}^T \int_0^1 p_t(x) \hat f_t(x)^2 dx
\end{align}

First, we observe that $\int_0^1 p_t(x) \hat{f}_t(x) = f_t(x_t)$. Also, we use the fact that 
	\[ \int_0^1 p_t(x) (\hat{f}_t(x))^2 dx = f_t(x_t) \hat{f}_t(x_t) \leq \hat{f}_t(x_t) \]

Thus, we can rewrite (\ref{eq:S:UB_w2}) to give us:
\begin{align}
  \label{eq:S:UB_w3}
\ln\left(\frac{W_{T + 1}}{W_1}\right) \leq \frac{\eta}{1-\gamma} \sum\limits_{t=1}^T f_t(x_t)  + \frac{(e-2)\eta^2}{1-\gamma}
\sum\limits_{t=1}^T  \hat f_t(x_t) 
\end{align}

Note that since $w_1 \equiv 1$, $W_1 = 1$. 
Let $x^*$ be some point in $[0, 1)$ that is a maximizer of $\sum_{t = 1}^T f_t(x)$. 
With probability $1$, there exists $x^*$ that lies in the interior of some interval defined by $\calI$, 
	denote this interval by $I(x^*)$. Since for all $t$, $\hat{f}_t$ is constant over $I(x^*)$, we have:
	\begin{align}
		\ln\left( \frac{W_{T + 1}}{W_1} \right) &\geq \eta \sum_{t = 1}^T \hat{f}_t(x^*) + \ln(|I(x^*)|) \label{eq:S:LB_w}
	\end{align}

	Putting (\ref{eq:S:UB_w3}) and (\ref{eq:S:LB_w}) together, rearranging some terms and dividing by $\eta$, we get:
 	\begin{align}
		\sum_{t = 1}^T \hat{f}_t(x^*) - \sum_{t = 1}^T f_t(x_t) &\leq \frac{\gamma}{1 - \gamma} \sum_{t = 1}^T f_t(x_t) + \frac{(e-2)\eta}{1 - \gamma} \sum_{t = 1}^T \hat{f}_t(x_t) - \frac{\ln(|I(x^*)|)}{\eta} 
\intertext{We use the fact that $\gamma \leq 1/2$, thus $1/(1 - \gamma) \leq 2$
to simplify some terms.  By adding $\sum_{t = 1}^T f_t(x^*)$ on both sides and
rearranging terms, we get,}
\label{eqn:S:regret-bound}
\sum_{t = 1}^T {f}_t(x^*) - \sum_{t = 1}^T f_t(x_t) &\leq 2{\gamma} \sum_{t =
1}^T f_t(x_t) + 2{(e-2)\eta} \sum_{t = 1}^T \hat{f}_t(x_t) -
\frac{\ln(|I(x^*)|)}{\eta} + \sum_{t = 1}^T {f}_t(x^*) - \sum_{t = 1}^T
\hat{f}_t(x^*) 
\end{align}

The LHS of (\ref{eqn:S:regret-bound}) is exactly the regret suffered by the
algorithm. Thus, to bound the expected regret, we only need to bound the
expectation of the RHS, where the expectation is taken with respect to both the
random choices of the algorithm and nature.

For the first term on the RHS of (\ref{eqn:S:regret-bound}), we simply use the
fact that $f_t(x_t) \leq 1$, hence the term is bounded by $2 \gamma T$. For the
second term, we first only look at the expectation with respect to the choice
of $x_t \sim p_t$ to get, 
\[ \E_{x_t \sim p_t}[\hat{f}_t(x_t)] = \sum_{I \in \cal I} p_t(x_t \in I) \cdot \frac{f_t(x_t)}{p_t(x_t \in I)} \leq |\calI| = \frac{1}{\mu} \]
and 
hence the expectation of the second term is bounded by $2 (e - 2) \eta T /\mu$.
The intervals in $\calI$ all have length $\mu$. Thus, $-\ln(|I(x^*)|)/\eta =
\ln(\mu^{-1})/\eta$. 

Finally, we deal with the last term in the RHS of (\ref{eqn:S:regret-bound}).
However, there is a subtle issue which requires us to use the fact that we are
playing against an \emph{oblivious} adversary. The point $x^*$ depends on the
random choices made by nature/adversary (denoted by $\F_{adv}$). Let $\calH_{t
- 1}$ denote the history of random choices made by the algorithm up to time $t
- 1$.  Thus, we can compute $\E[ \hat{f}_t(x^*) ~|~ \F_{adv}, \calH_{t-1}]$,
where the expectation is only with respect to the random choices made by the
algorithm at time $t$.  Let $Z_t$ be $1$ if the interval $I_t$ is not a
sub-interval of some interval on which $f_t$ is constant, \ie $I_t$ is not a
sub-interval of one of the pieces used to define $f_t$. First, we note that 
\[ f_t(x^*)	- \E[\hat{f}_t(x^*) ~|~ \F_{adv}, \calH_{t-1}] \leq Z_t, \]
where the expectation is only with respect to $x_t \sim p_t$. This holds since,
if $Z_t = 0$, then $\E[\hat{f}_t(x^*) ~|~ \F_{adv}, \calH_{t - 1}] =
f_t(x^*)$, else, the LHS is at most $f_t(x^*) \leq 1 = Z_t$.

Thus, we only need to bound the $\E[Z_t]$ which only depends on the random
choices made by the nature/adversary. We consider the distributions that define
the points of discontinuities of $f_t$. Depending on the order in which the
adversary chooses to exercise their random choices, they may not know exactly
the interval $I(x^*) \in \calI$. However, the probability of inserting points
of discontinuity in $I(x^*)$ can only increase if we assume that they know
$I(x^*)$. To maximize the probability of this event they can put as much of
probability mass in $I(x^*)$ for each of the $k$ points of discontinuity as
possible. Since the density function of the distributions defining the points
of discontinuity is bounded by $\sigma$, this probability cannot exceed $k \mu
\sigma$ (since $|I(x^*)| = \mu$). Thus, $\E[Z_t] \leq k \mu \sigma$. Thus, the
expectation of the last term in the RHS of (\ref{eqn:S:regret-bound}) can be
bounded by $k\sigma \mu T$. 

Putting everything together, we get,
\[ \E[\Regret(\Alg)] \leq 2 \gamma T + 2 \frac{\eta}{\gamma \mu} T +  \frac{1}{\eta} \ln\left(\mu^{-1}\right)  +  
  k \sigma \mu T \]
Optimizing the RHS of the above equation with respect to $\mu$, $\gamma$ and
$\eta$ (while maintaining $\gamma \leq 1/2$ and $\eta \leq \gamma \mu$), gives
that the expected regret is bounded by $O(\poly(k, \sigma, \log(T)) T^{2/3})$.
\end{proof}

\noindent{\bf Running Time}. As in the full-information setting, the running
time of our algorithm can be made polynomial in $\log(t)$ by using the
\emph{augmented} interval trees defined in Section~\ref{sec:interval-trees}.

\section{Applications and Experiments}

\label{sec:experiments}
This section illustrates our framework with concrete examples. We describe
three problems for which greedy approaches (also called \emph{priority
algorithms} by \cite{BNR:03}) are often used in practice. We provide
experimental results that highlight the efficiency of our approach.

We recall the definitions introduced by \cite{GR2016}. An optimization problem
$\Pi$ is an \emph{object assignment} problem if its input consists of a set of
objects with so-called attributes and a solution to $\Pi$ is a function $S :
[n] \rightarrow \reals_+$ where $n$ is the number of objects, and $[n]$ denotes
the set $\{1, \ldots, n\}$.  For a given instance of an object assignment
problem, the \emph{attribute} of object $i$, $\xi_i$, is an ordered set of
$\ell$ real values.  Let $\Xi_1, \ldots, \Xi_{\ell}$ be the sets of possible
values for attribute $1, \ldots, \ell$, respectively.  For example, the
knapsack problem is an object assignment problem: its input is a set of items
with two attributes, value and size, in $\reals_+$.  A solution to the problem
is a function $S : [n] \rightarrow \{0,1\}$, where $S(i) = 1$ if and only if
the $i$th item is part of the solution.

Define the functions $\lambda : \reals_+ \times \Xi_1 \times \ldots \times
\Xi_{\ell} \rightarrow \reals_+$ as \emph{single-parameter scoring rules},
where the first argument is the \emph{parameter}.  A \emph{family} of
single-parameter scoring rules is a set of scoring rules of the form
$\lambda(\rho, \xi_i)$ for each parameter value $\rho$ in some interval $I
\subseteq \reals$ and such that $\lambda$ is continuous in $\rho$ for each
fixed value of $\xi_i$.  We say that a family of single-parameter scoring rules
is \emph{$\kappa$-crossing} if for each $\xi_i,\xi_j, \xi_i \neq \xi_j$ there
are at most $\kappa$ values of $\rho$ for which $\lambda(\rho,\xi_i) =
\lambda(\rho,\xi_j)$.

Define an \emph{assignment rule} $\alpha_{\lambda}$ as a function which given an object $i$ and the value $\lambda(\rho, \xi_i)$
for some single-parameter scoring rule $\lambda$ with parameter value $\rho$,
computes the value of $S(i)$ and possibly modifies the attributes $\xi \setminus \{\xi_i\}$.
We say that an assignment rule is \emph{$\beta$-bounded} if for any object $i$, the number of different values that the attribute $\xi$ can take
is at most $\beta$.

We are now ready to introduce the notion of \emph{greedy heuristic}.
An algorithm $A$ for an object assignment problem is a greedy heuristic if it greedily computes a solution $S$ according
to a scoring-rule $\lambda$ and an assignment rule $\alpha_{\lambda}$.
Thus, a family of $\kappa$-crossing scoring rules and a $\beta$-bounded assignment rule defines a $\kappa,\beta-$\emph{family of greedy heuristics}.

\cite{GR2016} show that the outputs of a $\kappa,\beta$-family of greedy heuristics on an instance of size $n$ is a
piecewise constant function with $O((sn\beta)^2 \kappa)$ pieces. Therefore we propose to apply our approach to the following problems.

\subsection{Examples of Applications}
\label{sect:examples}
We exhibit families of greedy heuristics for three classical $\mathrm{NP}$-hard optimization problems.

\emph{The Knapsack problem:} The input is a set of $n$ pairs value and size, $(v_i,s_i)$, and a capacity $C$.
The objective is to output a subset $S \subseteq [n]$ such that $\sum_{j \in S} s_j \le C$ and $\sum_{j \in S} v_j$ is maximum.

\emph{A family of greedy heuristics for Knapsack:} 
Given a parameter $\rho$ and an instance $\xi = \{\xi_1 = (v_1,s_1),\ldots,\xi_n=(v_n,s_n)\}$, the greedy heuristic performs the following computations.
It first orders the elements by non-decreasing values of $v_i/(s_i)^{\rho}$. Then the heuristic greedily (subject to feasibility)
adds objects to the solution in this order. Note that the cases $\rho = 0$ and $\rho = 1$ are classical heuristic for knapsack.

\emph{The Maximum Weighted Independent Set problem (MWIS):} The input is a graph $G = (V,E)$, and weights $w_i$ for each of the $n$ vertices.
The objective is to output a subset $S \subseteq [n]$ such that for any $i,j \in S$, $(i,j) \notin E$ and $\sum_{j \in S} w_j$ is maximum.

\emph{A family of greedy heuristics for MWIS:} Denote by $N(i)$ the set of
neighbors of $i$ in $G$.  Given a parameter $\rho$ and an instance $\xi =
\{\xi_1 = (w_1,N(1)) ,\ldots,\xi_n=(w_n,N(n))\}$, we consider the
\emph{adaptive} greedy heuristic.  It adds all the degree 0 vertices to the
solution and adds the vertex maximizing $w_i/|N(i)|^{\rho}$, then removes
vertex $i$ and all the vertices of $N(i)$ from the graph, updates $N(j)$ for
the remaining vertices $j$, and repeats until there are no more vertices in the
graph.

The Knapsack and MWIS problems were studied by \cite{GR2016}.
Here, we also consider the weighted $k$-means problem.

\emph{The weighted $k$-means problem :} The input is a set of $n$ points, a
metric $d : [n] \times [n] \rightarrow \reals_+$, and a set of weights
$\{w_1,\ldots,w_n\}$. The objective is to output a subset $S \subseteq [n]$ of size $k$ such that  $\sum_{i \in [n]} \min_{j \in S} w_i \cdot d(i,j)^2$ is
minimized.

\emph{A family of greedy heuristics for $k$-means:}
Given a parameter $\rho$ and an instance $\xi = \{\xi_1 = (\{d(1,i)\mid i \in [n]\}) ,\ldots, \xi_n=(\{d(n,i)\mid i \in [n]\}))\}$,
we consider the \emph{adaptive} greedy heuristic, which can be seen as a generalization of Gonzalez' algorithm \citep{G85}
for the $k$-center problem (when $\rho = \infty$).\\
\begin{algorithm}
	\alginput: $k$, $\rho$, $\xi = \{\xi_1 = (\{d(1,i)\mid i \in [n]\}) ,\ldots, \xi_n=(\{d(n,i)\mid i \in [n]\})\}$, \smallskip \\
	\algset~ $S \gets \emptyset$ \smallskip \\
	\algfor~ $t = 1, 2, \ldots, k$ \algdo
        \begin{enumerate}
          \item Add to $S$ the point $p$ that maximizes $ \max_{i \in S} w_p/d(p,i)^{1/\rho}$.
        \end{enumerate}
        Return $S$
	\caption{Adaptive Greedy for weighted $k$-means.}
\end{algorithm}

\subsection{Experiments}
In this section, we describe experimental results to illustrate the efficiency of Algorithm \ref{alg:expwt}.
\begin{figure}[t!]
  \centering
  \begin{minipage}[b]{0.45\textwidth}
    \centering
    \includegraphics[width=1\linewidth]{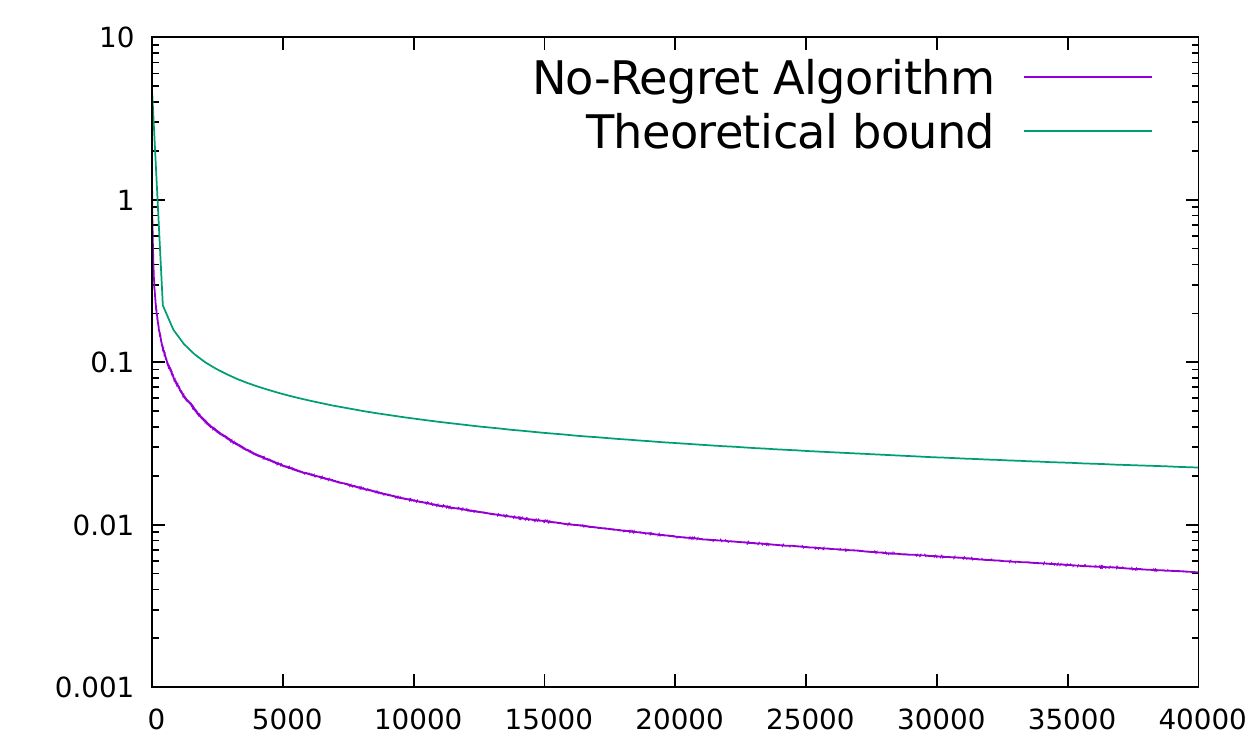}
    \subcaption{Knapsack}
    \label{fig:knap}
  \end{minipage}
  \begin{minipage}[b]{0.45\textwidth}
    \centering
    \includegraphics[width=1\linewidth]{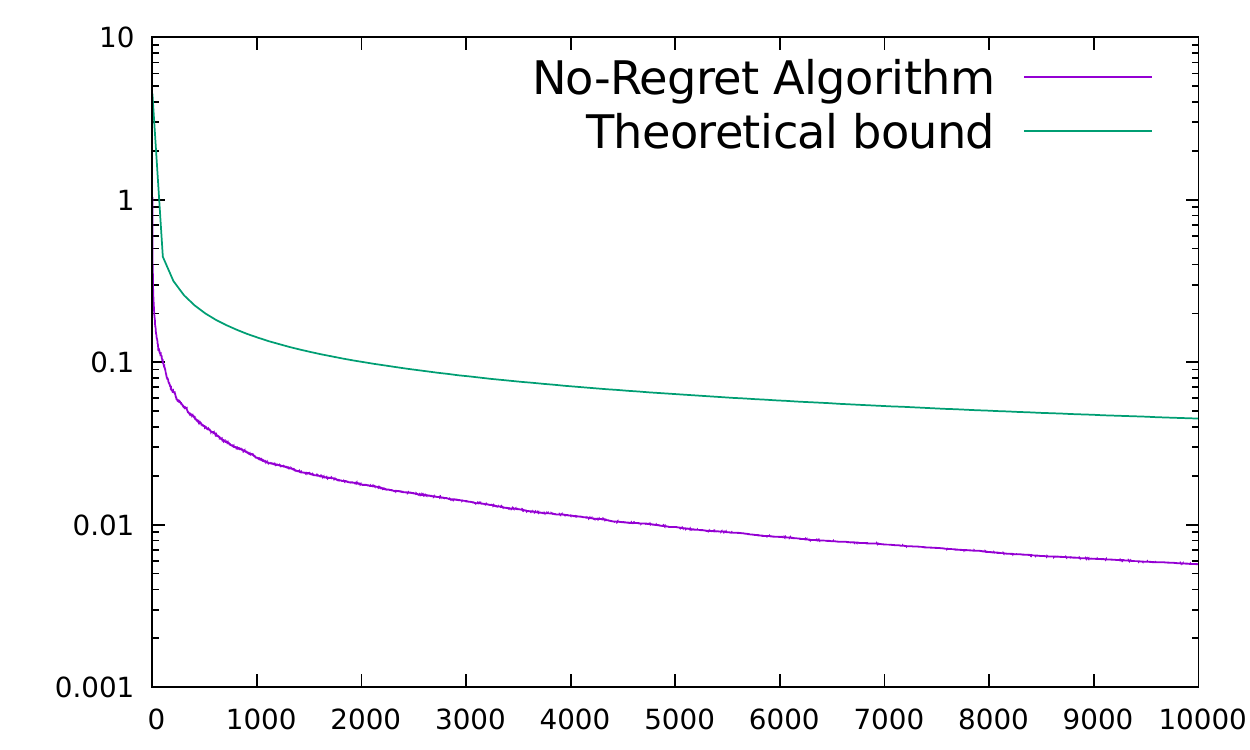}
    \subcaption{MWIS}
    \label{fig:mwis}
  \end{minipage}
  \begin{minipage}[b]{0.45\textwidth}
    \centering
    \includegraphics[width=1\linewidth]{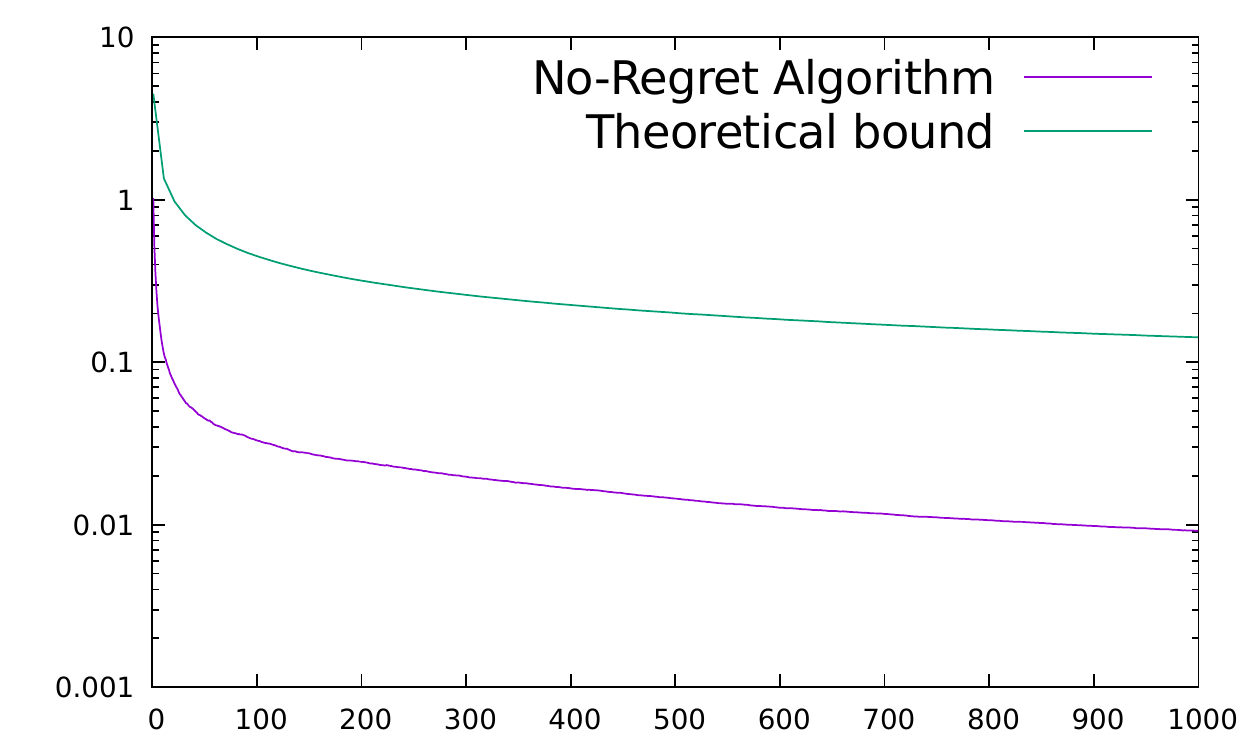}
    \subcaption{Weighted $k$-means}
    \label{fig:kmeans}
  \end{minipage}

  \caption{Average per-round regret of Algorithm \ref{alg:expwt} for Knapsack, MWIS and weighted $k$-Means, and the theoretical bound of $2\eta$.
    The $x$-axis represents the timesteps. The $y$-axis represents the per-round regret.}
\end{figure}

\emph{Knapsack:} A random instance for knapsack consists of $n = 20$ real numbers for the values picked uniformly and independently
at random in the interval $[0,1)$ together with $n$ real numbers for the sizes picked uniformly and independently at random in the interval $[0,1]$
and a capacity bound of 1.
We run Algorithm \ref{alg:expwt} for the family of greedy
heuristics described in Section \ref{sect:examples} for $\rho \in [0,1]$ and for $T = 40000$ steps. We performed $100$ rounds of the algorithm.
Figure \ref{fig:knap} shows the mean per-round regret of Algorithm \ref{alg:expwt} measured after $t$ timesteps.

\emph{MWIS:} For the MWIS problem, we consider random instances consisting of a random Erd\H{o}s-R\'{e}nyi graph with $n = 20$ vertices and
with edge probability $p \in [0.1, 0.5]$ and $n$ values for the weights picked uniformly and independently at random in the interval $[0,1]$ .
We run the family of greedy heuristics described in section \ref{sect:examples}, with $\rho \in [0,1]$ and for $T=10000$.
We performed $100$ rounds of the algorithm.
Again, Figure \ref{fig:mwis} shows the mean per-round regret of Algorithm \ref{alg:expwt} measured after $t$ timesteps.

\emph{Weighted $k$-Means:} In the experiments for weighted $k$-Means, we consider random instances of $n=10$ points and where $k=3$ defined as follows.
The points lie in $R^2$ and are picked independently from $k$ gaussians.
The weights are picked uniformly and independently at random in the interval $[0,1]$.
We run the family of greedy heuristics described in section \ref{sect:examples}, with $\rho \in [0,1]$ and for $T=1000$ and
again performed $100$ rounds of the algorithm.
Again, Figure \ref{fig:kmeans} shows the mean per-round regret of Algorithm \ref{alg:expwt} measured after $t$ timesteps.

Figures \ref{fig:knap},\ref{fig:mwis} and \ref{fig:kmeans} show that the per-round regret of Algorithm \ref{alg:expwt} decreases
quickly and that the algorithm has significantly better performances on random instances than in the worst-case scenario.
In all the cases, the variance introduced by the internal randomness of the No-Regret Algorithm is very small.

\bibliography{all-refs}

\begin{thebibliography}{19}
\providecommand{\natexlab}[1]{#1}
\providecommand{\url}[1]{\texttt{#1}}
\expandafter\ifx\csname urlstyle\endcsname\relax
  \providecommand{\doi}[1]{doi: #1}\else
  \providecommand{\doi}{doi: \begingroup \urlstyle{rm}\Url}\fi

\bibitem[Auer et~al.(2003)Auer, Cesa-Bianchi, Freund, and
  Schapire]{AC-BFS:2003}
Peter Auer, Nicol\`{o} Cesa-Bianchi, Yoav Freund, and Robert~E. Schapire.
\newblock The non-stochastic multi-armed bandit problem.
\newblock \emph{SIAM J. Comput.}, 32\penalty0 (1):\penalty0 48--77, January
  2003.
\newblock ISSN 0097-5397.
\newblock \doi{10.1137/S0097539701398375}.
\newblock URL \url{http://dx.doi.org/10.1137/S0097539701398375}.

\bibitem[Borodin et~al.(2003)Borodin, Nielsen, and Rackoff]{BNR:03}
Allan Borodin, Morten~N. Nielsen, and Charles Rackoff.
\newblock ({I}ncremental) priority algorithms.
\newblock \emph{Algorithmica}, 37\penalty0 (4):\penalty0 295--326, 2003.
\newblock \doi{10.1007/s00453-003-1036-3}.
\newblock URL \url{http://dx.doi.org/10.1007/s00453-003-1036-3}.

\bibitem[Bubeck et~al.(2009)Bubeck, Munos, and Stoltz]{BMG:2009}
S{\'e}bastien Bubeck, R{\'e}mi Munos, and Gilles Stoltz.
\newblock Pure exploration in multi-armed bandits problems.
\newblock In \emph{Proceedings of the 20th International Conference on
  Algorithmic Learning Theory}, ALT'09, pages 23--37, Berlin, Heidelberg, 2009.
  Springer-Verlag.
\newblock ISBN 3-642-04413-1, 978-3-642-04413-7.
\newblock URL \url{http://dl.acm.org/citation.cfm?id=1813231.1813240}.

\bibitem[Cormen et~al.(2009)Cormen, Leiserson, Rivest, and Stein]{CLRS}
Thomas~H. Cormen, Charles~E. Leiserson, Ronald~L. Rivest, and Clifford Stein.
\newblock \emph{Introduction to Algorithms, Third Edition}.
\newblock The MIT Press, 3rd edition, 2009.
\newblock ISBN 0262033844, 9780262033848.

\bibitem[Fink(1998)]{F:98}
Eugene Fink.
\newblock How to solve it automatically: Selection among problem-solving
  methods.
\newblock In \emph{Proceedings of the Fourth International Conference on
  Artificial Intelligence Planning Systems}, pages 128--136. AAAI Press, 1998.

\bibitem[Freund and Schapire(1995)]{FS:1995}
Yoav Freund and Robert~E. Schapire.
\newblock A decision-theoretic generalization of on-line learning and an
  application to boosting.
\newblock In \emph{Proceedings of the Second European Conference on
  Computational Learning Theory}, EuroCOLT '95, pages 23--37, London, UK, UK,
  1995. Springer-Verlag.
\newblock ISBN 3-540-59119-2.
\newblock URL \url{http://dl.acm.org/citation.cfm?id=646943.712093}.

\bibitem[Gonzalez(1985)]{G85}
Teofilo~F. Gonzalez.
\newblock Clustering to minimize the maximum intercluster distance.
\newblock \emph{Theor. Comput. Sci.}, 38:\penalty0 293--306, 1985.
\newblock \doi{10.1016/0304-3975(85)90224-5}.
\newblock URL \url{http://dx.doi.org/10.1016/0304-3975(85)90224-5}.

\bibitem[Gupta and Roughgarden(2016)]{GR2016}
Rishi Gupta and Tim Roughgarden.
\newblock A {PAC} approach to application-specific algorithm selection.
\newblock In \emph{Proceedings of the 2016 ACM Conference on Innovations in
  Theoretical Computer Science}, ITCS '16, pages 123--134, New York, NY, USA,
  2016. ACM.
\newblock ISBN 978-1-4503-4057-1.
\newblock \doi{10.1145/2840728.2840766}.
\newblock URL \url{http://doi.acm.org/10.1145/2840728.2840766}.

\bibitem[Huang et~al.(2010)Huang, Jia, Yu, gon Chun, Maniatis, and
  Naik]{HJYCMN:10}
Ling Huang, Jinzhu Jia, Bin Yu, Byung gon Chun, Petros Maniatis, and Mayur
  Naik.
\newblock Predicting execution time of computer programs using sparse
  polynomial regression.
\newblock In J.~Lafferty, C.~Williams, J.~Shawe-Taylor, R.S. Zemel, and
  A.~Culotta, editors, \emph{Advances in Neural Information Processing Systems
  23}, pages 883--891. 2010.
\newblock URL \url{http://books.nips.cc/papers/files/nips23/NIPS2010_1279.pdf}.

\bibitem[Hutter et~al.(2015)Hutter, Xu, Hoos, and Leyton{-}Brown]{HXHL:15}
Frank Hutter, Lin Xu, Holger Hoos, and Kevin Leyton{-}Brown.
\newblock Algorithm runtime prediction: Methods and evaluation (extended
  abstract).
\newblock In \emph{Proceedings of the Twenty-Fourth International Joint
  Conference on Artificial Intelligence, {IJCAI} 2015, Buenos Aires, Argentina,
  July 25-31, 2015}, pages 4197--4201, 2015.
\newblock URL \url{http://ijcai.org/papers15/Abstracts/IJCAI15-595.html}.

\bibitem[Kale et~al.(2015)Kale, Lee, and Pal]{KLP15}
Satyen Kale, Chansoo Lee, and David Pal.
\newblock Hardness of online sleeping combinatorial optimization problems.
\newblock 2015.

\bibitem[Kanade and Steinke(2014)]{KS14}
Varun Kanade and Thomas Steinke.
\newblock Learning hurdles for sleeping experts.
\newblock \emph{ACM Trans. Comput. Theory}, 6\penalty0 (3):\penalty0
  11:1--11:16, July 2014.
\newblock ISSN 1942-3454.
\newblock \doi{10.1145/2505983}.
\newblock URL \url{http://doi.acm.org/10.1145/2505983}.

\bibitem[Kleinberg et~al.(2008)Kleinberg, Slivkins, and Upfal]{KSU:2008}
Robert Kleinberg, Aleksandrs Slivkins, and Eli Upfal.
\newblock Multi-armed bandits in metric spaces.
\newblock In \emph{Proceedings of the Fortieth Annual ACM Symposium on Theory
  of Computing}, STOC '08, pages 681--690, New York, NY, USA, 2008. ACM.
\newblock ISBN 978-1-60558-047-0.
\newblock \doi{10.1145/1374376.1374475}.
\newblock URL \url{http://doi.acm.org/10.1145/1374376.1374475}.

\bibitem[Kleinberg(2004)]{Kle:2004}
Robert~D. Kleinberg.
\newblock Nearly tight bounds for the continuum-armed bandit problem.
\newblock In \emph{18th Advances in Neural Information Processing Systems},
  2004.

\bibitem[Kotthoff et~al.(2012)Kotthoff, Gent, and Miguel]{KGM:12}
Lars Kotthoff, Ian~P. Gent, and Ian Miguel.
\newblock An evaluation of machine learning in algorithm selection for search
  problems.
\newblock \emph{AI Commun.}, 25\penalty0 (3):\penalty0 257--270, August 2012.
\newblock ISSN 0921-7126.
\newblock URL \url{http://dl.acm.org/citation.cfm?id=2350296.2350300}.

\bibitem[Neu and Valko(2014)]{NV14}
Gergely Neu and Michal Valko.
\newblock Online combinatorial optimization with stochastic decision sets and
  adversarial losses.
\newblock In \emph{NIPS}, 2014.

\bibitem[Snoek et~al.(2012)Snoek, Larochelle, and Adams]{SLA:2014}
Jasper Snoek, Hugo Larochelle, and Ryan~P. Adams.
\newblock Practical {B}ayesian optimization of machine learning algorithms.
\newblock In \emph{NIPS}, 2012.

\bibitem[Snoek et~al.(2014)Snoek, Swersky, Zemel, and Adams]{SSZA:2014}
Jasper Snoek, Kevin Swersky, Richard Zemel, and Ryan~P. Adams.
\newblock Input warping for {B}ayesian optimization of non-stationary
  functions.
\newblock In \emph{31 st International Conference on Machine Learning},
  Beijing, China, 2014.

\bibitem[Spielman and Teng(2004)]{ST:2004}
Daniel~A. Spielman and Shang-Hua Teng.
\newblock Smoothed analysis of algorithms: Why the simplex algorithm usually
  takes polynomial time.
\newblock \emph{J. ACM}, 51\penalty0 (3):\penalty0 385--463, May 2004.
\newblock ISSN 0004-5411.
\newblock \doi{10.1145/990308.990310}.
\newblock URL \url{http://doi.acm.org/10.1145/990308.990310}.

\end{thebibliography}
\bibliographystyle{plainnat}

\end{document}